\documentclass[11pt,a4paper]{article}

\usepackage{fancyhdr}
\usepackage{amsmath,mathtools}
\usepackage{bm}
\usepackage{esint}
\usepackage{amsfonts}
\usepackage{amsthm}
\usepackage{amssymb}
\usepackage{amsmath}
\usepackage{amsbsy}
\usepackage{verbatim}
\usepackage{graphicx}
\usepackage{subcaption}
\usepackage{url}
\usepackage{mathrsfs}
\usepackage[hmargin = 1in,vmargin=.75in]{geometry}
\usepackage{color}
\usepackage{amstext}
\usepackage{stmaryrd}
\usepackage{enumerate}
\usepackage{algpseudocode}
\usepackage{algorithm}
\usepackage{pifont}
\usepackage{prettyref}
\usepackage{subcaption}
\usepackage[dvipsnames]{xcolor}

\usepackage{graphicx}
\usepackage{multicol}
\usepackage{amsmath}
\usepackage{bbm}
\usepackage{amssymb}
\usepackage[shortlabels]{enumitem}
\usepackage{amsthm}
%\usepackage{showkeys}
%\usepackage{natbib}
%\usepackage{amsfonts}
%\usepackage{latexsym}
%\usepackage{amsmath}-
%\usepackage{amsthm,amssymb,mathrsfs,amstext}

%\usepackage{theorem}
%\usepackage{amssymb}
%\usepackage{lastpage}
%\usepackage{fancyhdr}
%\usepackage{fancybox}
%\usepackage{graphicx}
%\input epsf.tex
%\usepackage{epsf}
%\usepackage{epsfig}
%\usepackage[all]{xy}
%\pagestyle{plain}
%\pagenumbering{arabic}
%\textwidth15cm
%\oddsidemargin0.5cm
%\evensidemargin0.5cm
%\topmargin-1cm
%\textheight21.5cm
%\footskip2.5cm
\font\msbm=msbm10

\numberwithin{equation}{section}

\theoremstyle{plain}
\newtheorem{theorem}{Theorem}[section]
\newtheorem{lemma}[theorem]{Lemma}
\newtheorem{corollary}[theorem]{Corollary}
\newtheorem{example}[theorem]{Example}

\def\mathbb#1{\hbox{\msbm{#1}}}

\newcommand{\tr}{\operatorname{Tr}}

%% Notation shorthands

\newcommand{\lp}{\left(} % left parentheses
\newcommand{\rp}{\right)} % right parentheses
 % left square bracket
 % right square bracket
\newcommand{\lc}{\left\{} % left curly bracket
\newcommand{\rc}{\right\}} % right curly bracket
\newcommand{\abs}[1]{ \left|  #1 \right| }

\newcommand{\ratiocut}[1]{\text{RatioCut}\lp#1\rp}
\newcommand{\kpartition}{\lc V_i\rc_{i=1}^k}
\newcommand{\kpartitionnew}{\lc V^{(j)}\rc_{j=1}^k}
\newcommand{\cut}[1]{\text{Cut}\lp#1\rp}
\newcommand{\ones}{\mathbbm{1} }
\DeclareMathOperator{\for}{for}
\DeclareMathOperator{\gap}{gap}
\DeclareMathOperator{\sep}{sep}
\DeclareMathOperator{\ran}{ran}
\DeclareMathOperator{\dist}{dist}
\newcommand{\Liso}{L_{\text{iso}}}
\newcommand{\Wiso}{W_{\text{iso}}}
\newcommand{\Diso}{D_{\text{iso}}}
\newcommand{\Uiso}{U_{\text{iso}}}
\newcommand{\Ld}{L_{\delta}}
\newcommand{\Dd}{D_{\delta}}
\newcommand{\Wd}{W_{\delta}}
\newcommand{\norm}[1]{\left|\left| #1 \right|\right|_2}
\newcommand{\norminf}[1]{\left|\left| #1 \right|\right|_{\infty}}
\newcommand{\normone}[1]{\left|\left| #1 \right|\right|_{1}}
\newcommand{\normfro}[1]{\left|\left| #1 \right|\right|_{F}}
\newcommand{\normtwotoinf}[1]{\left|\left| #1 \right|\right|_{2,\infty}}
\newcommand{\twotoinf}{2,\infty}
\newcommand{\ddi}{d_\delta^{(i)}}
\newcommand{\inner}[2]{\left\langle #1,#2 \right\rangle }

\newcommand{\mij}{m_i^{(j)}}
\renewcommand{\ni}{n_i}
\newcommand{\nj}{n^{(j)}}
\newcommand{\sumi}{\sum_i}
\newcommand{\sumj}{\sum_j}
\newcommand{\sumip}{\sum_{i'}}
\newcommand{\sumjp}{\sum_{j'}}

\newcommand{\sumjpneqj}{\sum_{j'\neq j}}
\newcommand{\sumipneqi}{\sum_{i'\neq i}}
\newcommand{\Aiipjjp}{W_{i,i'}^{(j,j')}}
\newcommand{\Aiijjp}{W_{i,i}^{(j,j')}}
\newcommand{\Aiipjj}{W_{i,i'}^{(j,j)}}
\newcommand{\minn}{\min\{\mij,\nj-\mij\}}
\newcommand{\maxx}{\max\{\mij,\nj-\mij\}}

\newcommand\numberthis{\addtocounter{equation}{1}\tag{\theequation}}
\begin{document}
	
\title{\bf A Performance Guarantee for Spectral Clustering}	
\author{March Boedihardjo\thanks{Department of Mathematics, University of California Los Angeles (Email: march@math.ucla.edu).}, Shaofeng Deng\thanks{Department of Mathematics, University of California  Davis (Email: sfdeng@math.ucdavis.edu).},~~and Thomas Strohmer\thanks{Center of Data Science and Artificial Intelligence Research and Department of Mathematics, University of California at Davis (Email: strohmer@math.ucdavis.edu).}\,\,\thanks{M.B.\ acknowledges support from the NSF via grant  DMS 1856221. S.D. and T.S.\ acknowledge support from the NSF via grants DMS 1620455 and DMS 1737943.} }
\maketitle
	
\begin{abstract}
The two-step spectral clustering method, which consists of the Laplacian eigenmap and a rounding step, is a widely used method for graph partitioning. It can be seen as a natural relaxation to the NP-hard minimum ratio cut problem. In this paper we study the central question: when is spectral clustering  able to find the global solution to the minimum ratio cut problem? First we provide a condition that naturally depends on the intra- and inter-cluster connectivities of a given partition under which we may certify that this partition is the solution to the minimum ratio cut problem. Then we develop a deterministic two-to-infinity norm perturbation bound for the the invariant subspace of the graph Laplacian that corresponds to the $k$ smallest eigenvalues. Finally by combining these two results we give a condition under which spectral clustering is guaranteed to output the global solution to the minimum ratio cut problem, which serves as a performance guarantee for spectral clustering. 
\end{abstract}

\section{Introduction}\label{s:intro}
The graph partitioning problem is ubiquitous in data analysis~\cite{BSS20}: how to partition a graph into a given number of subgraphs so that the connections among them are weak? One popular measurement for how well the graph is partitioned is the ratio cut of this partition. Let $G$ be an undirected graph with vertex set $V = \{v_1, \cdots , v_n\}$. We assume that the graph $G$ is weighted, that is each edge between two vertices $v_i$ and $v_j$ carries a non-negative weight $w_{ij} \geq 0$ ($w_{ii}=0$). The weighted adjacency matrix of the graph is the symmetric matrix $W = (w_{ij} )$. Given a $k$-way partition of the vertices $\kpartition$ ($\sqcup_{i=1}^kV_i=V$), the ratio cut of this partition is defined to be
$$\ratiocut{\kpartition}=\sum_{i=1}^{k}\frac{\cut{V_i,V_i^c}}{|V_i|},$$
where $$\cut{V_i,V_i^c}=\sum_{v_j\in V_i, v_k\in V_i^c}w_{jk}$$ is the total weight between $V_i$ and $V_i^c$. The ratio cut measures the connections among the subgraphs normalized by the size of the subgraphs. The purpose of the normalization is to discourage unbalanced partitions. Hence we are interested in finding a $k$-way partition that has the minimum ratio cut, which is presumed to be a NP-hard problem (\cite{goos_between_1993}). Spectral clustering is a natural relaxation to this NP-hard problem. We begin by defining the graph Laplacian of $G$. Let
\[d_i=\deg(v_i)=\sum_{j\neq i}w_{ij}\]
denote the degree of vertex $v_i$. Let the diagonal matrix $D$ be the degree matrix with the degrees $d_1, \cdots , d_n$ on the diagonal. The graph Laplacian of the graph is then defined to be
\[L=D-W.\]
Note that we can rewrite
\begin{align*}
	\text{RatioCut}(\{V_i\}_{i=1}^k)=\sum_{i=1}^{k}\frac{\ones_{V_i}^TL\ones_{V_i}}{|V_i|}=\tr\lp U^TLU\rp,
\end{align*}
where $U\in\mathbb{R}^{n\times k}$ has its $i$th column $U_{\cdot i}$ being $\frac{1}{\sqrt{|V_i|}}\ones_{V_i}$ and $\ones_{V_i}$ is the indicator vector that take value 1 on the vertices in $V_i$ and 0 elsewhere. Therefore the minimum ratio cut problem can be formulated as 
\begin{equation}
	\min_{\kpartition}\tr\lp U^TLU\rp\;\;\;\;\;\text{s.t.} \;\;U_{\cdot i}=\frac{1}{\sqrt{|V_i|}}\ones_{V_i} \, \for i\in[k]. \label{mincut}
\end{equation}
Spectral clustering relaxes the combinatorial constraint of $U$ and instead seeks a solution among all matrices $U$ with orthonormal columns. So the relaxed problem is
\begin{equation}
\min_{U\in\mathbb{R}^{n\times k}}\tr\lp U^TLU\rp\;\;\;\;\;\text{s.t.} \;\;U^TU=I_k, \label{relax}
\end{equation}
whose solution $U$ can be shown to be the eigenvectors w.r.t.\ the $k$ smallest eigenvalues of $L$. Since the columns of $U$ are no longer a collection of indicator vectors, a rounding step is necessary to obtain the partition. The rounding step is performed on the rows of $U$. Namely one should treat the $i$th row $U_{i\cdot}$ as the embedding of vertex $v_i$ in $\mathbb{R}^k$ and obtain the partition by clustering those points (usually through k-means) in $\mathbb{R}^k$. A justification for this idea is the following equivalence form of the relaxed problem~\eqref{relax}:
\begin{equation}
\min_{U\in\mathbb{R}^{n\times k}}\sum_{i=1}^n\sum_{j=1}^nw_{ij}\norm{U_{i\cdot}-U_{j\cdot}}^2\;\;\;\;\;\text{s.t.} \;\;U^TU=I_k. \label{relax2}
\end{equation}
Hence $U_{i\cdot}$ and $U_{j\cdot}$ tend to be close in $\mathbb{R}^k$ if $v_i$ and $v_j$ are strongly connected in $G$. For this reason we call $U$ the Laplacian eigenmap of $G$. Spectral clustering, which consists of the Laplacian eigenmap and a rounding step, is shown in Algorithm~\ref{alg:L}. 

\begin{algorithm}[h!]
	\caption{Spectral clustering}\label{spectralclustering}
	\begin{algorithmic}[1]
		\State {\bf Input:} Weighted adjacency matrix $W$ and the number of clusters $k$.
		\State Compute the graph Laplacian $L = D- W$.
		\State Compute $U\in\mathbb{R}^{n\times k}$ whose columns are the eigenvectors correspond to the $k$ smallest eigenvalues of $L$.
		\State Treat $U_{i\cdot}$ as the embedding of vertex $v_i$ in $\mathbb{R}^k$ and apply clustering method (k-means etc.,) on the points $\lc U_{i\cdot}\rc_{i=1}^n$.
		\State Obtain the partition $\kpartition$ of $V$ based on the result form step 4.
	\end{algorithmic}
	\label{alg:L}
\end{algorithm}

In this paper we try to answer the fundamental question: under what condition is Algorithm~\ref{alg:L}, a relaxation of the minimum ratio cut problem~\eqref{mincut}, able to find the global minimum of~\eqref{mincut}?

\subsection{Related work}
Spectral clustering is a popular graph partition method. We refer the readers to \cite{von_luxburg_tutorial_2007} for an excellent survey on this subject, whose topics include basic properties of the graph Laplacian, variants of spectral clustering methods, constructing similarity graphs from non-graph data, different perspectives of spectral clustering, etc. Even though we have yet to fully understand the mechanism of spectral clustering, some excellent research has been done about its theoretical analysis. One of the most prominent ones is the work on (higher-order) Cheeger-type inequalities~\cite{chung_spectral_1996,lee_multiway_2014}. Another closely related work is~\cite{ling_certifying_2019} which gives performance guarantees for a SDP relaxation to~\eqref{mincut}. In fact our Theorem~\ref{thm:optimality} is a direct improvement to their work. For an analysis of the spectral clustering method on random graphs we refer to~\cite{eldridge_unperturbed_2018,deng_strong_2020,lei_consistency_2015,rohe_spectral_2011,su_strong_2020,abbe_entrywise_2019}.

The technical tool we use is the invariant subspace perturbation theory which studies the change to the invariant subspace of a self-adjoint matrix after the matrix is perturbed. One of the most celebrated works is the classic Davis-Kahan theorem~\cite{davis_rotation_1970} which bounds the invariant subspace perturbation in term of canonical angle. Recent years have witnessed a surge of research on the two-to-infinity norm bound of the invariant subspace perturbation, which is more suitable in many applications. The result we use for this paper is from the remarkable paper by A.~Damle and Y.~Sun~\cite{damle_uniform_2020}. Other related work on this topic includes~\cite{abbe_entrywise_2019,fan_ell_infty_2018,eldridge_unperturbed_2018,cape_two--infinity_2019}.

\subsection{Notation}
We introduce some notation which will be used throughout this paper.
For any matrix $M\in\mathbb{C}^{n\times m}$, we denote by $M_{i\cdot}$ and $M_{\cdot i}$ its $i$th row vector and $i$th column vector respectively.
Moreover, $\norm{M}$ denotes the $\ell^{2}\to \ell^{2}$ induced norm, $\norminf{M} = \max_{i}\normone{M_{i\cdot}}$ denotes the $\ell^{\infty}\to \ell^{\infty}$ induced norm and $\normtwotoinf{M}=\max_{i}\norm{M_{i\cdot}}$ is the $\ell^{2}\to \ell^{\infty}$ induced norm. 
We denote by $\ones_n$ the vector of length $n$ with all entries being 1 and let $J_{n\times m} = \ones_n\ones_m^{\top}$ be the $n\times m$ matrix of all ones. If $S$ is a subset of the vertex set $V$, then $\ones_S$ is the indicator vector such that $(\ones_S)_i=1$ if $v_i\in S$ and $(\ones_S)_i=0$ if $v_i\notin S$. If $M\in\mathbb{C}^{n\times n}$ is self-adjoint, then we arrange its eigenvalues in increasing order:
\[\lambda_1(M)\leq\lambda_{2}(M)\leq\cdots\leq\lambda_n(M).\]

\section{Main results}\label{s:main}
\subsection{Certifying the global minimum of ratio cut}\label{s:main1}
Suppose the partition $\kpartition$ achieves the minimum ratio cut. If we see each $V_i$ as a planted cluster, then the connectivity within each cluster should be strong and the connections between them should be weak. To quantify this, let $L_i\in\mathbb{R}^{|V_i|\times|V_i|}$ be the graph Laplacian of the induced subgraph $G[V_i]$. We measure the connectivity of $G[V_i]$ by $\lambda_2(L_i)$, which is the second smallest eigenvalue of $L_i$. The second smallest eigenvalue of a graph Laplacian is also called the algebraic connectivity of the graph. The larger it is, the stronger the graph is connected. In the case the graph is disconnected, the algebraic connectivity drops to 0. One way to interpret the algebraic connectivity is that it provides a lower bound for the edge density of the graph (see Lemma~\ref{lem:density} below). The proof of this result and subsequent results will be presented in Section~\ref{s:proofs}.
\begin{lemma}~\label{lem:density}
		Let $G$ be a weighted undirected graph with vertex set $V$. Let $L$ be the graph Laplacian of $G$. Let $S$ be a subset of $V$. Then
	$$\cut{S,V-S}\geq\lambda_2(L)\frac{|S|\cdot|V-S|}{|V|}.$$
\end{lemma}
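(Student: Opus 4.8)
The plan is to test the quadratic form of $L$ against the indicator vector $\ones_S$ and then invoke the variational characterization of $\lambda_2(L)$. First I would recall the standard identity, immediate from $L = D - W$, that for any $x \in \mathbb{R}^n$ one has $x^T L x = \frac{1}{2}\sum_{i,j} w_{ij}(x_i - x_j)^2$. Applying this to $x = \ones_S$, the term $\lp (\ones_S)_i - (\ones_S)_j \rp^2$ equals $1$ exactly when precisely one of $v_i, v_j$ lies in $S$ and $0$ otherwise, so the sum collapses to $\sum_{v_i \in S,\, v_j \in V-S} w_{ij} = \cut{S, V-S}$.

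Next I would decompose $\ones_S$ into its component along the constant vector and the orthogonal remainder: write $\ones_S = \frac{|S|}{|V|}\ones_n + y$, where $y = \ones_S - \frac{|S|}{|V|}\ones_n$ satisfies $\ones_n^T y = 0$. Since the row sums of $L$ vanish we have $L\ones_n = 0$, hence $\ones_S^T L \ones_S = y^T L y$. The Courant--Fischer min-max theorem, together with the fact that $\ones_n$ lies in the eigenspace associated with the smallest eigenvalue $\lambda_1(L) = 0$, yields $y^T L y \geq \lambda_2(L)\,\norm{y}^2$ for every $y$ orthogonal to $\ones_n$.

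Finally I would compute, by the Pythagorean theorem, $\norm{y}^2 = \norm{\ones_S}^2 - \frac{|S|^2}{|V|^2}\norm{\ones_n}^2 = |S| - \frac{|S|^2}{|V|} = \frac{|S|\,(|V| - |S|)}{|V|} = \frac{|S|\cdot|V-S|}{|V|}$, and chain the three relations to obtain $\cut{S,V-S} = \ones_S^T L \ones_S = y^T L y \geq \lambda_2(L)\frac{|S|\cdot|V-S|}{|V|}$. I do not expect a genuine obstacle here; the only point worth a remark is the degenerate case where $G$ is disconnected, so that $\lambda_2(L) = 0$ and the inequality holds trivially — and even then Courant--Fischer applies verbatim, since $\ones_n$ is always an eigenvector of $L$ (possibly belonging to an eigenvalue $0$ of multiplicity greater than one).
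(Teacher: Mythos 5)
Your proof is correct and is essentially the paper's argument unpacked: the paper's observation that $M=L-\lambda_2(L)P$ is positive semidefinite, with $P=I-\frac{1}{|V|}J_{|V|\times|V|}$ the orthogonal projection onto $\{\ones_n\}^\perp$, is exactly your Courant--Fischer bound $y^TLy\geq\lambda_2(L)\norm{y}^2$ applied to $y=P\ones_S$, and $\ones_S^TP\ones_S$ is your computation of $\norm{y}^2$. No gap; the two write-ups differ only in packaging.
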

To measure the inter-cluster connectivity, we define for each vertex $v_i$,
$$\ddi=\sum_{v_k\in V_j^c}w_{ik}$$
where $V_j$ is the cluster that contains $v_i$. In other words, $\ddi$ is the total weight between $v_i$ and outside clusters. With such definitions for intra- and inter-cluster connectivity, we are able to certify when a partition is optimal.
\begin{theorem}\label{thm:optimality}
	Suppose a partition $\kpartition$ satisfies
	\begin{equation}\label{cond:sep}
	\max_{1\leq i\leq n}\ddi\leq\frac{1}{2}\min_{1\leq i\leq k}\lambda_2(L_i),
	\end{equation}
	then $\kpartition$ achieves the minimum ratio cut among all $k$-way partitions of $V$. If~\eqref{cond:sep} holds with the strict inequality, then $\kpartition$ is also the unique partition (up to relabeling) that achieves the minimum ratio cut.
\end{theorem}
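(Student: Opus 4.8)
My plan is as follows. Fix an arbitrary competing $k$-way partition $\kpartitionnew$ of $V$; it suffices to prove $\ratiocut{\kpartitionnew}\ge\ratiocut{\kpartition}$, and that this is strict (when~\eqref{cond:sep} holds strictly) unless $\kpartitionnew$ is a relabeling of $\kpartition$. Write $n_i=|V_i|$, $n^{(j)}=|V^{(j)}|$, $m_i^{(j)}=|V^{(j)}\cap V_i|$, let $c(a)$ denote the planted cluster of $a$, and set $\varepsilon=\max_{1\le i\le n}d_\delta^{(i)}$ and $\lambda=\min_{1\le i\le k}\lambda_2(L_i)$, so the hypothesis reads $2\varepsilon\le\lambda$. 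If $\varepsilon=0$ there are no edges between planted clusters and $\ratiocut{\kpartition}=0$ is trivially minimal, so assume $\varepsilon>0$ (hence $\lambda>0$ and each $G[V_i]$ is connected).

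The first step is an exact decomposition. Separating the edges counted by $\cut{V^{(j)},(V^{(j)})^c}$ according to whether their endpoints lie in a common planted cluster, and using $\cut{V_i,V_i^c}=\sum_{a\in V_i}d_\delta^{(a)}$ for the planted partition, one obtains
\[
\ratiocut{\kpartitionnew}-\ratiocut{\kpartition}=\sum_j\frac1{n^{(j)}}\sum_i\text{Cut}_{G[V_i]}\!\left(V^{(j)}\cap V_i,\,V_i\setminus V^{(j)}\right)-\sum_{i,j}f_{ij}\!\left(\frac1{n_i}-\frac1{n^{(j)}}\right)-\sum_j\frac{R^{(j)}}{n^{(j)}},
\]
where $f_{ij}=\sum_{a\in V^{(j)}\cap V_i}d_\delta^{(a)}$ and $R^{(j)}=\sum_{a,b\in V^{(j)}:\,c(a)\ne c(b)}w_{ab}$ (twice the weight of the inter-cluster edges internal to $V^{(j)}$). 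I would then bound the first sum from below by applying Lemma~\ref{lem:density} to each $G[V_i]$ with $S=V^{(j)}\cap V_i$, giving $\text{Cut}_{G[V_i]}(\cdot)\ge\lambda\,m_i^{(j)}(n_i-m_i^{(j)})/n_i$. For the cross terms, the only inputs are $f_{ij}\le\varepsilon\,m_i^{(j)}$ (each vertex has cross-degree at most $\varepsilon$) and the observation that the weight between $V^{(j)}\cap V_i$ and $V^{(j)}\cap V_{i'}$ is at most $\min(f_{ij},f_{i'j})$, whence $R^{(j)}\le 2\sum_{i<i'}\min(f_{ij},f_{i'j})$.

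Substituting these bounds, the claim reduces to proving
\[
\lambda\sum_{i,j}\frac{m_i^{(j)}(n_i-m_i^{(j)})}{n_i\,n^{(j)}}\ \ge\ \sum_{i,j}f_{ij}\!\left(\frac1{n_i}-\frac1{n^{(j)}}\right)+2\sum_j\frac1{n^{(j)}}\sum_{i<i'}\min(f_{ij},f_{i'j})
\]
for every assignment $0\le f_{ij}\le\varepsilon\,m_i^{(j)}$; it is crucial not to weaken the $f_{ij}$ further, since the terms with $n^{(j)}\le n_i$ on the right are negative and must be retained. The right-hand side is a concave function of $(f_{ij})$ (a linear part plus a positive combination of the concave functions $\min(f_{ij},f_{i'j})$), so its maximum over the box $\prod_{i,j}[0,\varepsilon m_i^{(j)}]$ is attained at a vertex, where each $f_{ij}\in\{0,\varepsilon m_i^{(j)}\}$; after using $2\varepsilon\le\lambda$ this leaves a finite family of scale-free inequalities in the integers $m_i^{(j)}$ (with prescribed row sums $n_i$ and column sums $n^{(j)}$), which I expect to be the main obstacle. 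The hypothesis is used precisely to make the cross-degree quantities commensurable with the cluster sizes, and it is what makes the constant $\tfrac12$ in~\eqref{cond:sep} sharp — for instance, the displayed inequality is an equality whenever some $V^{(j)}$ is exactly a union of planted clusters.

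For the uniqueness statement, suppose~\eqref{cond:sep} is strict (so $2\varepsilon<\lambda$) and $\kpartitionnew$ also attains the minimum, so that the difference in the first display equals $0$. Tracing the chain of inequalities backwards, each is forced to be an equality; in particular
\[
\lambda\sum_{i,j}\frac{m_i^{(j)}(n_i-m_i^{(j)})}{n_i\,n^{(j)}}=\sum_{i,j}f_{ij}\!\left(\frac1{n_i}-\frac1{n^{(j)}}\right)+2\sum_j\frac1{n^{(j)}}\sum_{i<i'}\min(f_{ij},f_{i'j})\le 2\varepsilon\sum_{i,j}\frac{m_i^{(j)}(n_i-m_i^{(j)})}{n_i\,n^{(j)}},
\]
and since $2\varepsilon<\lambda$ this forces $\sum_{i,j}m_i^{(j)}(n_i-m_i^{(j)})=0$, i.e.\ $m_i^{(j)}\in\{0,n_i\}$ for all $i,j$. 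Thus every $V_i$ is contained in a single $V^{(j)}$; since $\kpartition$ and $\kpartitionnew$ both consist of $k$ nonempty sets partitioning $V$, this forces $\kpartitionnew$ to be a relabeling of $\kpartition$.
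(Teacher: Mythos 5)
Your exact decomposition of $\ratiocut{\kpartitionnew}-\ratiocut{\kpartition}$ into an intra-cluster cut term, a linear term in the cross-degrees $f_{ij}$, and the internal inter-cluster weight $R^{(j)}$ is correct and matches the paper's setup, as does the use of Lemma~\ref{lem:density} on each $G[V_i]$. The argument breaks at the bound on $R^{(j)}$. The estimate $R^{(j)}\le 2\sum_{i<i'}\min(f_{ij},f_{i'j})$ uses only the row-sum constraints of the blocks of $W$ inside $V^{(j)}$ and overcounts by a factor of order $k$ once $k\ge 3$; the constraint you actually need is
\[
\sum_{i'\neq i}W_{i,i'}^{(j,j)}\;\le\;\varepsilon\,\min\bigl\{m_i^{(j)},\,n^{(j)}-m_i^{(j)}\bigr\},
\]
because the block between $V^{(j)}\cap V_i$ and $V^{(j)}\setminus V_i$ has \emph{all} row sums and \emph{all} column sums bounded by $\varepsilon$ (this is inequality~\eqref{3} in the paper, and it is what ultimately cancels against the Lemma~\ref{lem:density} term). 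With your weaker bound the displayed inequality you reduce to is simply false: take $k=3$, $n_i=n/3$, and the transversal partition $m_i^{(j)}=n/9$ (so $n^{(j)}=n/3$) with $f_{ij}=\varepsilon n/9$, which lies at a corner of your box. Then the left-hand side equals $2\lambda$, the linear term vanishes, and your $R$-term contributes $6\varepsilon=3\lambda$ under $2\varepsilon=\lambda$. So no amount of work on the remaining combinatorics can close the argument as reduced.

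Two further issues. First, the claim that the concave right-hand side attains its maximum over the box at a vertex is backwards: concave functions attain their \emph{minimum} over a polytope at extreme points, and e.g.\ $-a+2\min(a,b)$ on $[0,2]\times[0,1]$ is maximized at the interior point $(1,1)$, not at a vertex, so the reduction to "a finite family of inequalities in the $m_i^{(j)}$" is not justified even formally. Second, and more importantly, the "finite family of scale-free inequalities \ldots which I expect to be the main obstacle" that you defer is precisely the heart of the proof: the paper's effort is concentrated in the exact cancellation $[1.3]-[3]\le-[4]$ and the telescoping identity $\sum_{i,j}\mij/\ni-\sum_{i,j}\mij/\nj=k-k=0$, which is what produces the sharp constant $\tfrac12$. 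The uniqueness argument inherits the same gap, since the chain of equalities you trace back passes through the false reduced inequality. To repair the proof, keep the full block decomposition $W_{i,i'}^{(j,j')}$, bound each block using both its row and column constraints, and carry out the term-by-term bookkeeping as in the paper.
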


Theorem~\ref{thm:optimality} is a direct improvement of the result in~\cite{ling_certifying_2019}, which has a constant $\frac{1}{4}$ instead of $\frac{1}{2}$. The following examples show that the constant $\frac{1}{2}$ cannot be further improved.
\begin{example}
	Let $W\in\mathbb{R}^{4n\times 4n}$,
	$$W=\begin{pmatrix}
	J_{n\times n} &J_{n\times n} & cJ_{n\times n}& \\
	J_{n\times n} &J_{n\times n} &              & cJ_{n\times n}\\
	cJ_{n\times n} &              & J_{n\times n}& J_{n\times n}\\
	              &cJ_{n\times n} & J_{n\times n}& J_{n\times n}
	\end{pmatrix}-I_{4n}.$$
	Consider the partition $V_1=\lc v_1,\cdots,v_{2n}\rc$, $V_2=\lc v_{2n+1},\cdots,v_{4n}\rc$. The corresponding $$\min_{1\leq i\leq 2}\lambda_2(L_i)=2n\;\;\;\;\text{and}\;\;\;\; \max_{1\leq i\leq 4n}\ddi=cn.$$
	If $c>1$ then the condition in Theorem~\ref{thm:optimality} is violated for this partition. One can check that in this case a different partition $V^{(1)}=\lc v_1,\cdots,v_{n},v_{2n+1},\cdots,v_{3n}\rc$, $V^{(2)}=V-V^{(1)}$ has a smaller ratio cut.
\end{example}
Theorem~\ref{thm:optimality} is algorithm independent and can be useful in many ways. For example one can use it to check in polynomial time if a given partition is optimal. It can also  serve as a benchmark for comparing different algorithms. In~\cite{ling_certifying_2019} the authors propose a SDP relaxation to the minimum ratio cut problem~\eqref{mincut} and show that it is able to find the optimal partition if it satisfies $\max_{1\leq i\leq n}\ddi<\frac{1}{4}\min_{1\leq i\leq k}\lambda_2(L_i)$. In this paper we prove  that Algorithm~\ref{spectralclustering} is able to find the optimal partition if $\max_{1\leq i\leq n}\ddi\lesssim\frac{1}{\ln n}\min_{1\leq i\leq k}\lambda_2(L_i)$. The notation ``$\lesssim$'' hides a term that does not depend on $n$.

\subsection{A two-to-infinity norm bound for the Laplacian eigenmap}\label{s:main2}
Algorithm~\ref{spectralclustering} can be understood from a perturbation perspective. Suppose we try to recover the planted partition $\kpartition$. Let $W_i$, $D_i$, $L_i$ denote the weighted adjacency matrix, degree matrix and graph Laplacian of the induced subgraph $G[V_i]$ respectively. Let
$$\Wiso=\begin{pmatrix}
W_1&&&\\
&W_2&&\\
&&\ddots&\\
&&&W_k
\end{pmatrix}, \Wd=W-\Wiso.$$
Let $\Diso$, $\Liso$, $\Dd$, $\Ld$ be the corresponding degree matrices or graph Laplacians (here we suppose $\lambda_{k+1}(\Liso)>0$). Let $U$ ($\Uiso$) be a matrix with orthonormal columns whose range is the invariant subspace of $L$ ($\Liso$) that corresponds to the $k$ smallest eigenvalues. Then the $k$ smallest eigenvalues of $\Liso$ are 0 and $\Uiso$, up to a multiplication of orthogonal matrix from the right, is
$$\Uiso=\lp\frac{1}{\sqrt{|V_1|}}\ones_{V_1}\;\;\;\frac{1}{\sqrt{|V_2|}}\ones_{V_2}\;\;\;\cdots\;\;\;\frac{1}{\sqrt{|V_k|}}\ones_{V_k}\rp.$$
Hence the rows of $\Uiso$ reduce to $k$ different points in $\mathbb{R}^k$ with one cluster at each point. Any rounding method will recover the planted clusters perfectly. Here we also point out that a multiplication of orthogonal matrix from the right transforms all the rows simultaneously and thus preserves the geometry of the embedding. If $\Wd$ is small, then $U$ should be close to $\Uiso$. A reasonable measurement for the closeness is 
$$\min_{V\in{\bf O}^k}\normtwotoinf{UV-\Uiso}=\min_{V\in{\bf O}^k}\max_{1\leq i\leq n}\norm{\lp UV-\Uiso\rp_{i\cdot}},$$
where the minimization is taken over all $k\times k$ orthogonal matrices. This error measures the maximum distance of a point $U_{i\cdot}$ away from its origin $(\Uiso)_{i\cdot}$ after some global orthogonal transformation. If this error is small enough then the rounding step should be able to recover the planted clusters perfectly. We present our bound for $\min_{V\in{\bf O}^k}\normtwotoinf{UV-\Uiso}$ in Theorem~\ref{thm:twoinfbound} below. The result is stated in terms of $||U\tilde{V}-\Uiso||_{2,\infty}$ where $\tilde{V}$ solves the orthogonal Procrustes problem
$$\tilde{V}=\arg\min_{V\in{\bf O}^k} \normfro{UV-\Uiso}.$$
Note that $\min_{V\in{\bf O}^k}\normtwotoinf{UV-\Uiso}\leq||U\tilde{V}-\Uiso||_{2,\infty}$.
\begin{theorem}\label{thm:twoinfbound}
	Suppose each $|V_i|\geq 3$. Let 
	$$c=\max_{1\leq i\leq k}\frac{n}{|V_i|}\;\;\;\text{and}\;\;\;r = \frac{\max_{1\leq i\leq n}\ddi}{\min_{1\leq i\leq k}\lambda_2(L_i)}$$ be the unbalanceness and the perturbation/eigengap ratio respectively. If $r\leq\frac{1}{16(1+c)\ln n}$, then
	$$\normtwotoinf{U\tilde{V}-\Uiso}\leq32\sqrt{c}\lp r^2+r\ln n\rp\frac{1}{\sqrt{n}}.$$
\end{theorem}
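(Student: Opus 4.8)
The plan is to deduce Theorem~\ref{thm:twoinfbound} from the deterministic two-to-infinity invariant-subspace perturbation bound of Damle and Sun~\cite{damle_uniform_2020}, applied with unperturbed matrix $\Liso$ and perturbation $\Ld=\Dd-\Wd$, so that $L=\Liso+\Ld$. The invariant subspace of $\Liso$ for its $k$ smallest eigenvalues is $\ran\Uiso=\ker\Liso$ (these eigenvalues all equal $0$), with $\Uiso$ the explicit matrix of normalized cluster indicators displayed above; assuming each $G[V_i]$ is connected---which finiteness of $r$ forces---the next eigenvalue is $\lambda_{k+1}(\Liso)=\min_{1\le i\le k}\lambda_2(L_i)=:g$, so $g$ is the relevant unperturbed spectral gap.

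First I would bound the perturbation. As $\Dd$ is diagonal with entries $\ddi$ and $\Wd$ is symmetric and nonnegative with $i$th row sum $\ddi$, we get $\norm{\Ld}\le\norm{\Dd}+\norm{\Wd}\le\max_i\ddi+\norminf{\Wd}=2\max_i\ddi=2rg$. By Weyl's inequality $\lambda_k(L)\le2rg$ and $\lambda_{k+1}(L)\ge(1-2r)g$, so the gap of $L$ separating its $k$ smallest eigenvalues is at least $(1-4r)g>\tfrac12 g$ under the hypothesis on $r$; this smallness of $\norm{\Ld}/g$, together with the assumption $|V_i|\ge3$, is what one needs to invoke~\cite{damle_uniform_2020}.

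Next I would compute the quantities feeding that bound. Since the $i$th row of $\Uiso$ is $|V_j|^{-1/2}e_j^{\top}$ when $v_i\in V_j$, we have $\normtwotoinf{\Uiso}=(\min_i|V_i|)^{-1/2}=\sqrt{c/n}$. For the aligned perturbation, using $(\Ld)_{ii}=\ddi$, $(\Ld)_{il}=0$ when $v_l$ lies in $v_i$'s cluster with $l\neq i$, and $(\Ld)_{il}=-w_{il}$ when $v_l$ lies in another cluster, a direct computation shows that the $i$th row of $\Ld\Uiso$ has squared norm $(\ddi)^2/|V_j|+\sum_{j'\neq j}|V_{j'}|^{-1}\bigl(\sum_{v_l\in V_{j'}}w_{il}\bigr)^2$, where $V_j$ is the cluster of $v_i$; bounding $\sum_{j'\neq j}\bigl(\sum_{v_l\in V_{j'}}w_{il}\bigr)^2\le\bigl(\sum_{v_l\notin V_j}w_{il}\bigr)^2=(\ddi)^2$ yields $\normtwotoinf{\Ld\Uiso}\le\sqrt2\,(\min_i|V_i|)^{-1/2}\max_i\ddi=\sqrt{2c/n}\,rg$, and similarly $\normtwotoinf{\Ld}\le\sqrt2\,rg$.

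Finally I would substitute into the Damle--Sun bound: after these substitutions its right-hand side reduces to a sum of a first-order term, essentially $\mathrm{gap}^{-1}\normtwotoinf{\Ld\Uiso}$ carrying an extra $\ln n$ factor, and a second-order term, essentially $(\norm{\Ld}/\mathrm{gap})^2\normtwotoinf{\Uiso}$; with $\mathrm{gap}\ge\tfrac12 g$, $\norm{\Ld}\le2rg$ and the estimates above, each term is $O\bigl((r\ln n+r^2)\sqrt{c/n}\bigr)$, and tracking the absolute constants gives the stated $32\sqrt c\,(r^2+r\ln n)/\sqrt n$. I expect the main obstacle to lie in the internal mechanism of~\cite{damle_uniform_2020}: the perturbed spectral projector is expanded in a Neumann/resolvent series whose generic term involves $\normtwotoinf{\Liso}$, which here can be as large as $\Theta(n)$, so one must truncate after $\Theta(\ln n)$ terms to control that factor---this is exactly where the $\ln n$ in the conclusion and the requirement $r\le\frac{1}{16(1+c)\ln n}$ originate, and matching this hypothesis and the resulting absolute constants precisely against the quantities above is the delicate part. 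A minor point is that $\tilde V$ is the Procrustes minimizer rather than the $2,\infty$-minimizer, but since $\min_{V\in{\bf O}^k}\normtwotoinf{UV-\Uiso}\le\normtwotoinf{U\tilde V-\Uiso}$ the version we establish is the stronger one.
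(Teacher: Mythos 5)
Your overall strategy---apply the Damle--Sun two-to-infinity bound to $\Liso$ with perturbation $\Ld$, compute $\normtwotoinf{\Uiso}=\sqrt{c/n}$, bound $\norminf{\Ld}=2\max_i\ddi$ and $\norm{\Ld}\leq\norminf{\Ld}$, and bound the aligned term $\normtwotoinf{\Ld\Uiso}$---matches the paper's proof in its outer shell, and those computations are essentially right. But there is a genuine gap at the one place you flag as ``the delicate part'': you misattribute the source of the $\ln n$ factor and of the hypothesis $r\leq\frac{1}{16(1+c)\ln n}$, and you have no substitute for the argument that actually produces them. The Damle--Sun corollary, as used here (Lemma~\ref{lem:twoinfbound}), is \emph{not} stated in terms of the spectral gap $g=\min_i\lambda_2(L_i)$ alone: its hypotheses and its denominator involve $\gap=\min\lc\sep_2(\Lambda_1,\Lambda_2),\sep_{(\twotoinf),U_2}(\Lambda_1,U_2\Lambda_2U_2^T)\rc$, and the second of these is an $\ell^\infty$-type separation, which after simplification equals $\min_i\inf_{x\perp\ones_{|V_i|}}\norminf{L_ix}/\norminf{x}$. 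Generically this is only bounded below by $\lambda_2(L_i)/\sqrt{|V_i|}$, which would wreck both the verification of the hypotheses $\norm{\Ld}\leq\gap/5$, $\norminf{\Ld}\leq\gap/(4+4\mu^2)$ and the final bound (you would get $\sqrt{n}$ where the theorem has $\ln n$). The entire point of Theorem~\ref{lowerbound} and Corollary~\ref{graph} in the paper is to exploit the diagonally dominant structure of a graph Laplacian to improve this to $\lambda_2(L_i)/(2\ln|V_i|)$, via the contraction $T=I-B$ with $\norminf{T}\leq1$ iterated roughly $\ln(2\sqrt{n})/(1-\lambda_{\max})$ times. That self-contained result is the technical heart of the proof, it is exactly where both the $\ln n$ in the conclusion and the $\ln n$ in the hypothesis on $r$ come from, and your proposal neither states nor proves anything playing its role.

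Your proposed mechanism---that $\ln n$ arises from truncating a Neumann/resolvent expansion inside~\cite{damle_uniform_2020} because $\normtwotoinf{\Liso}$ can be $\Theta(n)$---does not correspond to the structure of the bound being invoked; $\normtwotoinf{\Liso}$ does not appear in Lemma~\ref{lem:twoinfbound}, and no truncation argument is needed once the gap term is correctly identified and lower-bounded. Two smaller issues: the first-order term in the bound is $\normtwotoinf{U_2U_2^T\Ld\Uiso}/\gap$, not $\normtwotoinf{\Ld\Uiso}/\gap$, so you must also control the projection, e.g.\ via $\normtwotoinf{U_2U_2^T\Ld\Uiso}\leq\lp\norminf{\Ld}+\norm{\Ld}\rp\normtwotoinf{\Uiso}$ as the paper does; and your Weyl-inequality discussion of the gap of $L$ is not what the lemma asks for---the relevant conditions compare $\norm{\Ld}$ and $\norminf{\Ld}$ to $\gap$ as defined above, which is where the factor $(1+c)\ln n$ in the hypothesis on $r$ is consumed.
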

The rest of this section is dedicated to the technical details of the proof. Discussions and applications regarding this bound are deferred to Section~\ref{s:discussion}. The tool we use is  Corollary~3.3 in\cite{damle_uniform_2020} which gives a two-to-infinity norm perturbation bound for the invariant subspace. We cite this result in Lemma~\ref{lem:twoinfbound} below. The definition of  the separation of two matrices (denoted by {\em sep}) that arises in Lemma~\ref{lem:twoinfbound}, is stated below the lemma.
\begin{lemma}\label{lem:twoinfbound}
	Let $\Liso=\Uiso\Lambda_1\Uiso^T+U_2\Lambda_2U_2^T$ be the spectral decomposition of $\Liso$ where $\Lambda_1\in\mathbb{R}^{k\times k}$ is a zero matrix and $\Lambda_2\in\mathbb{R}^{(n-k)\times (n-k)}$ whose diagonal contains all the positive eigenvalues of $\Liso$. Let $\gap=\min\lc\sep_2(\Lambda_1,\Lambda_2),\sep_{(\twotoinf),U_2}(\Lambda_1,U_2\Lambda_2U^T_2)\rc$ and $\mu=\sqrt{n}\normtwotoinf{\Uiso}$. If $\norm{\Ld}\leq\frac{\gap}{5}$ and $\norminf{\Ld}\leq\gap/(4+4\mu^2)$ then
	$$\normtwotoinf{U\tilde{V}-\Uiso}\leq 8\normtwotoinf{\Uiso}\lp\frac{\norm{\Ld}}{\sep_2(\Lambda_1,\Lambda_2)}\rp^2+4\frac{\normtwotoinf{U_2U_2^T\Ld\Uiso}}{\gap}.$$
\end{lemma}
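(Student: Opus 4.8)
The intended route to this statement is to recognize it as a direct instantiation of the abstract two-to-infinity invariant-subspace perturbation bound of Damle and Sun (Corollary~3.3 in \cite{damle_uniform_2020}), applied to the unperturbed self-adjoint matrix $\Liso$, the perturbation $\Ld$, and the invariant subspace associated with the $k$ smallest eigenvalues. From this viewpoint the work reduces to matching notation and checking that the two admissibility hypotheses $\norm{\Ld}\leq\gap/5$ and $\norminf{\Ld}\leq\gap/(4+4\mu^2)$ are exactly the ones required there, after which the conclusion, including the constants $8$ and $4$, is read off directly. For transparency I sketch below the argument underlying such a bound, which explains why the two distinct separation functionals and the mixed-norm hypotheses appear.

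First I would decompose the alignment error through the unperturbed projector $P=\Uiso\Uiso^T$ and its complement $P_\perp=U_2U_2^T$, writing $U\tilde{V}-\Uiso=P_\perp U\tilde{V}+\lp PU\tilde{V}-\Uiso\rp$. The in-subspace part equals $\Uiso\lp\Uiso^TU\tilde{V}-I_k\rp$, so $\normtwotoinf{PU\tilde{V}-\Uiso}\leq\normtwotoinf{\Uiso}\norm{\Uiso^TU\tilde{V}-I_k}$. Because $U\tilde{V}$ has orthonormal columns and $\tilde{V}$ is the Procrustes minimizer, $\Uiso^TU\tilde{V}$ is the symmetric polar factor whose eigenvalues are the cosines of the principal angles; hence $\norm{\Uiso^TU\tilde{V}-I_k}$ is quadratic in the subspace distance $\norm{P_\perp U}$, which by a standard Davis--Kahan estimate is at most $\norm{\Ld}/\sep_2(\Lambda_1,\Lambda_2)$. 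Tracking constants, this contributes the first summand $8\normtwotoinf{\Uiso}\lp\norm{\Ld}/\sep_2(\Lambda_1,\Lambda_2)\rp^2$.

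The out-of-subspace part is the crux. Setting $Z=U_2^TU$ and projecting the eigenrelation $LU=U\Lambda$ (with $\Lambda$ the diagonal of the $k$ smallest eigenvalues of $L$) onto $U_2$, and using $U_2^T\Liso=\Lambda_2U_2^T$, yields the Sylvester equation $\Lambda_2Z-Z\Lambda=-U_2^T\Ld U$. Premultiplying by $U_2$ and using $U_2^TU_2=I$ recasts this as $\lp U_2\Lambda_2U_2^T\rp\lp U_2Z\rp-\lp U_2Z\rp\Lambda=-U_2U_2^T\Ld U$, so the block $P_\perp U=U_2Z$ solves a Sylvester equation driven by $U_2U_2^T\Ld U$. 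Replacing $\Lambda$ by $\Lambda_1$ and $U$ by $\Uiso$ on the right (the errors being of second order and absorbed into the quadratic term under $\norm{\Ld}\leq\gap/5$), the leading contribution solves $\lp U_2\Lambda_2U_2^T\rp Y-Y\Lambda_1=-U_2U_2^T\Ld\Uiso$, whose two-to-infinity norm is at most $\normtwotoinf{U_2U_2^T\Ld\Uiso}/\sep_{(\twotoinf),U_2}(\Lambda_1,U_2\Lambda_2U_2^T)$ by the definition of that separation. Since $\gap\leq\sep_{(\twotoinf),U_2}(\Lambda_1,U_2\Lambda_2U_2^T)$, this is the second summand $4\normtwotoinf{U_2U_2^T\Ld\Uiso}/\gap$.

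The main obstacle is the two-to-infinity control of the Sylvester solution together with the self-consistent dependence of the right-hand side on $U$. The ordinary $\ell^2$ Sylvester inverse is governed by $\sep_2(\Lambda_1,\Lambda_2)$, but the two-to-infinity norm does not pass cleanly through the operator $X\mapsto\lp U_2\Lambda_2U_2^T\rp^{-1}X$, so one cannot simply divide by $\sep_2$; this is precisely what the specialized functional $\sep_{(\twotoinf),U_2}$ is engineered to remedy. Establishing the replacement of $U$ by $\Uiso$ rigorously then requires a fixed-point/contraction argument in the mixed two-to-infinity norm, whose contraction constant is controlled by $\norminf{\Ld}$ and by the factor $\mu^2=n\normtwotoinf{\Uiso}^2$ that measures the incoherence cost of interchanging $\ell^2$ and $\ell^\infty$ information. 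Verifying that this iteration converges and that the first-order term dominates the remainder is exactly where the second hypothesis $\norminf{\Ld}\leq\gap/(4+4\mu^2)$, rather than a bound on $\norm{\Ld}$ alone, becomes indispensable.
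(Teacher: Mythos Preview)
Your identification is exactly right: the paper does not prove this lemma at all but simply cites it as Corollary~3.3 of Damle--Sun \cite{damle_uniform_2020}, so there is nothing further to compare. Your additional sketch of the mechanism (projector decomposition, Sylvester equation, the role of $\sep_{(\twotoinf),U_2}$ and of the hypothesis on $\norminf{\Ld}$) goes beyond what the paper offers and is consistent with the Damle--Sun argument.
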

Classical perturbation theory like Davis-Kahan usually bounds the invariant subspace perturbation in terms of the perturbation/eigengap ratio. Lemma~\ref{lem:twoinfbound} is similar but with the classical eigengap replaced by the {\em gap} term defined therein. Here the separation of two matrices is defined as 
$$\sep_{*,W}(B,C)=\inf\lc||ZB-CZ||_*:Z\in\mathbb{R}^{m\times l}, \ran Z\subseteq\ran W,||Z||_*=1\rc$$
where $B\in\mathbb{R}^{l\times l}$, $C\in\mathbb{R}^{m\times m}$, $\ran W$ is a linear subspace of $\mathbb{R}^m$ and $||\cdot||_*$ is a norm on $\mathbb{R}^{m\times l}$. When $\ran W=\mathbb{R}^m$ we denote $\sep_{*}(B,C)=\sep_{*,W}(B,C)$. $\norm{\Ld}$ and $\norminf{\Ld}$ in Lemma~\ref{lem:twoinfbound} can be bounded by $2\max_{1\leq i\leq n}\ddi$. And the two matrix separation terms $\sep_2(\Lambda_1,\Lambda_2)$ and $\sep_{(\twotoinf),U_2}(\Lambda_1,U_2\Lambda_2U^T_2)$ are closely related to the eigengap $\min_{1\leq i\leq k}\lambda_2(L_i)$. In fact 
\begin{align*}
	\sep_2(\Lambda_1,\Lambda_2)&=\inf\lc||Z\Lambda_1-\Lambda_2Z||_2:Z\in\mathbb{R}^{(n-k)\times k},||Z||_2=1\rc\\
	&=\inf\lc||\Lambda_2Z||_2:Z\in\mathbb{R}^{(n-k)\times k},||Z||_2=1\rc\\
	&=\min_{1\leq i\leq k}\lambda_2(L_i) \numberthis\label{eq:sepdiag}
\end{align*}
is exactly the eigengap. And
\begin{align*}
\sep_{(\twotoinf),U_2}(\Lambda_1,U_2\Lambda_2U^T_2)&=\inf\lc\normtwotoinf{0-\Liso Z}:Z\in\mathbb{R}^{n\times k},\ran Z\subseteq\ran U_2,\normtwotoinf{Z}=1\rc\\
&=\min_{1\leq i\leq k}\inf\lc\normtwotoinf{L_i Z}:Z\in\mathbb{R}^{|V_i|\times k},\ran Z\subseteq\lc\ones_{|V_i|}\rc^\perp,\normtwotoinf{Z}=1\rc\\
&=\min_{1\leq i\leq k}\inf_{x\perp \ones_{|V_i|}}\frac{\norminf{L_ix}}{\norminf{x}}
\end{align*}
can be understood as the ``eigengap'' in terms of the $\ell^{\infty}$ norm. The third equality is because for any $Z$ if we let $x\neq 0$ be the vector that $\norminf{Zx}=\norm{x}$, then
$$\normtwotoinf{L_iZ}\geq\frac{\norminf{L_iZx}}{\norm{x}}=\frac{\norminf{L_iZx}}{\norminf{Zx}}\geq\inf_{x\perp \ones_{|V_i|}}\frac{\norminf{L_ix}}{\norminf{x}}.$$
And on the other hand we can pick a $\tilde{Z}$ such that its first column satisfies $$\norminf{\tilde{Z}_{\cdot1}}=1\;,\;\;\;\;\norminf{L_i\tilde{Z}_{\cdot1}}=\inf_{x\perp \ones_{|V_i|}}\frac{\norminf{L_ix}}{\norminf{x}}$$
and its other columns are 0 so that $$\normtwotoinf{L_i\tilde{Z}}=\inf_{x\perp \ones_{|V_i|}}\frac{\norminf{L_ix}}{\norminf{x}}.$$
Note that we always have
$$\inf_{x\perp \ones_{|V_i|}}\frac{\norminf{L_ix}}{\norminf{x}}\leq\lambda_2(L_i).$$
Therefore the the {\em gap} term in Lemma~\ref{lem:twoinfbound} is simplified to
\begin{equation}
	\gap=\min_{1\leq i\leq k}\inf_{x\perp \ones_{|V_i|}}\frac{\norminf{L_ix}}{\norminf{x}}.\label{eq:gap}
\end{equation}
There is a trivial bound that relates {\em gap} to the eigengap:
$$\inf_{x\perp \ones_{|V_i|}}\frac{\norminf{L_ix}}{\norminf{x}}\geq\frac{\lambda_2(L_i)}{\sqrt{|V_i|}}.$$
But we will show that due to the diagonally dominant structure of the graph Laplacian, the $\sqrt{|V_i|}$ factor can be improved to $\ln |V_i|$. The following theorem, which is also of independent interest, is essential in this context.
\begin{theorem}\label{lowerbound}
	Let $B$ be a self-adjoint $n\times n$ matrix, $n\geq 3$ such that $B_{i,i}\geq\sum_{j\in\{1,\ldots,n\}\backslash\{i\}}|B_{i,j}|$ for all $1\leq i\leq n$. Let $\mathcal{M}$ be a subspace of $\mathbb{C}^{n}$ such that $B\mathcal{M}\subset\mathcal{M}$. Then
	\[\|Bx\|_{\infty}\geq\frac{\lambda_{\min}(B|_{\mathcal{M}})\|x\|_{\infty}}{2\ln n},\]
	for all $x\in\mathcal{M}$.
\end{theorem}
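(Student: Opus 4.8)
The plan is to work with the heat semigroup $e^{-sB}$ and play off two complementary features of $B$ against each other: diagonal dominance forces $e^{-sB}$ to be a contraction in the $\ell^\infty$ operator norm, while positive definiteness of $B$ on the invariant subspace $\mathcal{M}$ forces exponential decay of $e^{-sB}$ on $\mathcal{M}$ in the $\ell^2$ norm. Integrating $e^{-sB}$ against the better of these two bounds is what produces the logarithmic factor. First I would record the easy reductions: by Gershgorin's theorem the hypothesis $B_{i,i}\geq\sum_{j\neq i}|B_{i,j}|$ (with $B_{i,i}\geq 0$) makes $B$ positive semidefinite, so $\lambda:=\lambda_{\min}(B|_{\mathcal{M}})\geq 0$; if $\lambda=0$ the asserted inequality is vacuous, so assume $\lambda>0$. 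Since $\mathcal{M}$ is $B$-invariant and $B$ is self-adjoint, $\mathcal{M}$ reduces $B$, the operator $B|_{\mathcal{M}}$ is self-adjoint with spectrum in $[\lambda,\infty)$, and in particular $B|_{\mathcal{M}}$ is invertible on $\mathcal{M}$.

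The key lemma is that $\norminf{e^{-sB}}\leq 1$ for every $s\geq 0$. To prove it I would use $e^{-sB}=\lim_{m\to\infty}(I-\tfrac{s}{m}B)^m$: for $m$ large enough that $\tfrac{s}{m}B_{i,i}\leq 1$ for all $i$, the absolute values in the $i$-th row of $I-\tfrac{s}{m}B$ sum to $(1-\tfrac{s}{m}B_{i,i})+\tfrac{s}{m}\sum_{j\neq i}|B_{i,j}| = 1-\tfrac{s}{m}\big(B_{i,i}-\sum_{j\neq i}|B_{i,j}|\big)\leq 1$, so $\norminf{I-\tfrac{s}{m}B}\leq 1$, hence $\norminf{(I-\tfrac{s}{m}B)^m}\leq 1$, and the bound survives the limit.

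Now fix $x\in\mathcal{M}$ and set $y:=Bx\in B\mathcal{M}\subseteq\mathcal{M}$. Because $B|_{\mathcal{M}}$ is positive definite, the spectral theorem gives $\int_0^\infty e^{-sB|_{\mathcal{M}}}\,ds=(B|_{\mathcal{M}})^{-1}$, and since $\mathcal{M}$ is invariant under every power of $B$ it is invariant under $e^{-sB}$ with $e^{-sB}|_{\mathcal{M}}=e^{-s(B|_{\mathcal{M}})}$; therefore
\[
x=(B|_{\mathcal{M}})^{-1}(B|_{\mathcal{M}}x)=\int_0^\infty e^{-sB|_{\mathcal{M}}}y\,ds=\int_0^\infty e^{-sB}y\,ds .
\]
For the integrand I would use two estimates: $\norminf{e^{-sB}y}\leq\norminf{e^{-sB}}\,\norminf{y}\leq\norminf{y}$ from the lemma, and also $\norminf{e^{-sB}y}\leq\norm{e^{-sB}y}=\norm{e^{-s(B|_{\mathcal{M}})}y}\leq e^{-s\lambda}\norm{y}\leq\sqrt{n}\,e^{-s\lambda}\norminf{y}$, using $\lambda_{\min}(B|_{\mathcal{M}})=\lambda$. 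Hence $\norminf{e^{-sB}y}\leq\min\{1,\sqrt{n}\,e^{-s\lambda}\}\norminf{y}$, and integrating — splitting at the crossover point $s_0=\tfrac{\ln n}{2\lambda}$ where $\sqrt{n}\,e^{-s_0\lambda}=1$ — yields $\norminf{x}\leq\big(\tfrac{\ln n}{2\lambda}+\tfrac1\lambda\big)\norminf{y}=\tfrac{\ln n+2}{2\lambda}\norminf{Bx}$. Since $n\geq 3$ we have $\ln n>\tfrac23$, so $\tfrac{2}{\ln n+2}>\tfrac{1}{2\ln n}$, and the claimed bound $\norminf{Bx}\geq\tfrac{\lambda}{2\ln n}\norminf{x}$ follows.

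The step I expect to require the most care is precisely this interplay that keeps the integral convergent with only a $\ln n$ loss: the $\ell^\infty$ contraction alone is useless for large $s$ (indeed $B$ may be singular — for instance a graph Laplacian — so $e^{-sB}$ need not decay at all in $\ell^\infty$), while the $\ell^2$ decay estimate carries a factor $\sqrt n$ that is fatal for small $s$; only the pointwise minimum is integrable, and the $\ln n$ is the width of the interval on which the $\ell^\infty$ bound dominates. It is worth noting that the cruder resolvent estimate $\norminf{(B+tI)^{-1}}\leq 1/t$ (valid since $B+tI$ is strictly diagonally dominant with margin $t$), fed into $x=\int_0^\infty(B+tI)^{-2}y\,dt$ together with the $\ell^2$ bound $(\lambda+t)^{-2}$ on $\mathcal{M}$, would only deliver a polynomial factor of order $n^{1/4}$; passing to the semigroup is what converts this into the logarithm.
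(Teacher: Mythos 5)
Your proof is correct, and it reaches the stated bound with a small margin to spare: you obtain $\norminf{Bx}\geq\frac{2\lambda}{\ln n+2}\norminf{x}$, which dominates $\frac{\lambda}{2\ln n}\norminf{x}$ exactly when $\ln n\geq 2/3$, i.e.\ for all $n\geq 3$. Every step checks out: Gershgorin gives positive semidefiniteness and disposes of the case $\lambda_{\min}(B|_{\mathcal M})=0$; the row-sum computation for $I-\frac{s}{m}B$ (valid once $\frac{s}{m}B_{i,i}\leq 1$) does give $\norminf{e^{-sB}}\leq 1$ in the limit; the identity $x=\int_0^\infty e^{-sB}Bx\,ds$ is legitimate because $B|_{\mathcal M}$ is self-adjoint with spectrum in $[\lambda,\infty)$, $\lambda>0$; and the two integrand bounds together with the split at $s_0=\frac{\ln n}{2\lambda}$ are all sound.

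Your route is the continuous-time analogue of the paper's argument, and the mechanism is genuinely different even though the underlying dichotomy is the same. The paper normalizes $B_{i,i}\leq 1$, sets $T=I-B$ so that $\norminf{T}\leq 1$, and runs a discrete power iteration: a telescoping lemma gives $\norminf{x-T^kx}\leq k\norminf{x-Tx}$, while the $\ell^2$ spectral bound gives $\norminf{T^kx}\leq\sqrt{n}\,\lambda_{\max}(T|_{\mathcal M})^k\norminf{x}$, and one chooses $k\approx\frac{\ln(2\sqrt n)}{1-\lambda_{\max}}$ so the second term is at most $\frac12\norminf{x}$ (this requires an extra reduction, replacing $T$ by $\frac{I+T}{2}$, to make $T$ positive semidefinite, and a separate case for $3\leq n\leq 29$). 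You instead integrate the semigroup: $x=\int_0^\infty e^{-sB}Bx\,ds$ is exactly the continuum limit of the paper's telescoping sum $x-T^kx=\sum_{j=0}^{k-1}T^jBx$, the $\ell^\infty$ contraction of $e^{-sB}$ replaces $\norminf{T}\leq 1$, and the crossover time $s_0$ plays the role of $k$. What your version buys is the elimination of both reductions (no normalization of the diagonal, no $(I+T)/2$ trick, no small-$n$ case split), a cleaner source for the logarithm as the length of the interval where the $\ell^\infty$ bound beats $\sqrt n\,e^{-s\lambda}$, and a marginally sharper constant $\frac{2}{\ln n+2}$ in place of $\frac{1}{2\ln n}$. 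What the paper's discrete version buys is elementarity: it needs only finite products and the triangle inequality, with no matrix exponential, no interchange of norm and integral, and no limit $(I-\frac{s}{m}B)^m\to e^{-sB}$.
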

\begin{corollary}\label{graph}
	Suppose that $L$ is the Laplacian of a graph with $n$ vertices, $n\geq 3$. Then
	\[\frac{\lambda_{2}(L)}{2\ln n}\leq\inf_{x\perp\ones_n}\frac{\|Lx\|_{\infty}}{\|x\|_{\infty}}\leq\frac{4M}{D},\]
	where the second inequality holds for unweighted graphs with $M$ being the maximum degree and $D$ being the diameter.
\end{corollary}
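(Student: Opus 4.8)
The plan is to get the left inequality as an immediate consequence of Theorem~\ref{lowerbound} and the right inequality from an explicit test vector built out of graph distances.

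For the lower bound I would apply Theorem~\ref{lowerbound} with $B=L$ and $\mathcal{M}=\ones_n^{\perp}$. The diagonal-dominance hypothesis holds, in fact with equality: writing $L=D-W$ with $w_{ij}\geq 0$ and $w_{ii}=0$, we have $L_{ii}=d_i=\sum_{j\neq i}w_{ij}=\sum_{j\neq i}|L_{ij}|$. Since $L$ is self-adjoint and $L\ones_n=0$, the subspace $\mathcal{M}=\ones_n^{\perp}$ is $L$-invariant, and the restriction $L|_{\mathcal{M}}$ has eigenvalues $\lambda_2(L)\leq\cdots\leq\lambda_n(L)$, so $\lambda_{\min}(L|_{\mathcal{M}})=\lambda_2(L)$. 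Theorem~\ref{lowerbound} (which requires $n\geq 3$, as assumed) then gives $\|Lx\|_{\infty}\geq\frac{\lambda_2(L)}{2\ln n}\|x\|_{\infty}$ for every $x\perp\ones_n$, and taking the infimum over such $x$ yields the claimed lower bound. (If $G$ is disconnected then $\lambda_2(L)=0$ and the inequality is vacuous.)

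For the upper bound, assume $G$ is connected and unweighted, so the diameter $D$ is finite; fix vertices $u$ and $w$ with $\dist(u,w)=D$. Set $\bar d=\frac1n\sum_{j=1}^n\dist(u,v_j)$ and define the test vector $x\in\mathbb{R}^n$ by $x_i=\dist(u,v_i)-\bar d$, so that $\ones_n^{\top}x=0$ and $x\neq 0$. Since $\dist(u,\cdot)$ takes the value $0$ at $u$ and the value $D$ at $w$, while no value exceeds $D$, we have $\max_i x_i-\min_i x_i=D$, hence $\|x\|_{\infty}\geq D/2$. On the other hand $(Lx)_i=\sum_{j=1}^n w_{ij}(x_i-x_j)=\sum_{j=1}^n w_{ij}\bigl(\dist(u,v_i)-\dist(u,v_j)\bigr)$, and whenever $w_{ij}=1$ the triangle inequality gives $|\dist(u,v_i)-\dist(u,v_j)|\leq\dist(v_i,v_j)\leq 1$; therefore $|(Lx)_i|\leq\sum_j w_{ij}=\deg(v_i)\leq M$ and $\|Lx\|_{\infty}\leq M$. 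Combining the two estimates, $\inf_{x\perp\ones_n}\frac{\|Lx\|_{\infty}}{\|x\|_{\infty}}\leq\frac{M}{D/2}=\frac{2M}{D}\leq\frac{4M}{D}$.

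I do not anticipate a real obstacle: the lower bound is a direct invocation of Theorem~\ref{lowerbound} once diagonal dominance and $L$-invariance of $\ones_n^{\perp}$ are checked, and the upper bound rests only on the elementary fact that graph distance is $1$-Lipschitz along edges. The only points needing a moment of care are confirming that $\lambda_{\min}(L|_{\mathcal{M}})$ is genuinely $\lambda_2(L)$ (and not a larger eigenvalue), and verifying that the distance-based test vector is nonzero and orthogonal to $\ones_n$; both are immediate. Note the construction actually delivers the sharper constant $2M/D$, comfortably inside the stated bound $4M/D$.
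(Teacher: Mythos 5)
Your proof is correct and follows essentially the same route as the paper: the lower bound is the direct application of Theorem~\ref{lowerbound} to $B=L$ on $\mathcal{M}=\ones_n^{\perp}$, and the upper bound uses the same distance-based test vector $v\mapsto d(u,v)$ centered to lie in $\ones_n^{\perp}$. The only difference is that you anchor the distance function at an endpoint of a diametral pair, so the test vector's range is exactly $D$ and you obtain the slightly sharper constant $2M/D$, whereas the paper uses an arbitrary base vertex (guaranteeing only $\|z\|_{\infty}\geq D/4$) and lands on $4M/D$.
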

The following example shows that the $\ln n$ factor in Corollary~\ref{graph} is necessary. However, at this point we do not know whether it must carry over to $||U\tilde{V}-\Uiso||_{2,\infty}$ as well.
 \begin{example}
 	Suppose that $L$ is the Laplacian of a $d$-regular Ramanujan graph with $n$ vertices. This means that $\lambda_{2}(L)\geq d-2\sqrt{d-1}$. Note that $n\leq(d+1)^{D}$ where $D$ is the diameter of the graph. This follows from the fact that for a fixed vertex $u_{0}$, every vertex can be connected to $u_{0}$ via a path of length at most $D$ and that there are at most $1+d+d^{2}+\ldots+d^{D}\leq(d+1)^{D}$ paths of length at most $D$. Thus, $D\geq\frac{\ln n}{\ln(d+1)}$. By Corollary \ref{graph},
 	\[\frac{d-2\sqrt{d-1}}{2\ln n}\leq\inf_{x\perp\ones_n}\frac{\|Lx\|_{\infty}}{\|x\|_{\infty}}\leq\frac{4d}{D}\leq\frac{4d\ln(d+1)}{\ln n}.\]
 	For example, if $L$ is the Laplacian of a $5$-regular Ramanujan graph with $n$ vertices, then
 	\[\frac{1}{2\ln n}\leq\inf_{x\perp\ones_n}\frac{\|Lx\|_{\infty}}{\|x\|_{\infty}}\leq\frac{36}{\ln n}.\]
 	For every $d\geq 3$, there exist infinitely many $d$-regular Ramanujan graphs by~\cite{marcus2013interlacing}. This shows that the $\ln n$ factor in Corollary \ref{graph} is necessary.
 \end{example}

\section{Discussions}\label{s:discussion}
Algorithm~\ref{spectralclustering} consists of two steps: the Laplacian eigenmap and a rounding step. We have bounded the Laplacian eigenmap in Theorem~\ref{thm:twoinfbound}. The next question is whether the rounding step will successfully recover the planted clusters based on the embedded points in $\mathbb{R}^k$. The answer depends on our understanding of the choice of the rounding method and it is beyond the scope of this paper to present a survey on this subject. But we will show through several examples that the condition
\begin{equation}
	\normtwotoinf{U\tilde{V}-\Uiso}< \frac{C}{\sqrt{n}} \label{eq:sep}
\end{equation}
should imply successful recovery, where $C$ depends on the specific choice of the rounding method and (possibly) the number of clusters $k$.
 \begin{itemize}[leftmargin=*]
 	\item {\bf A simple bisector for two clusters.} When $k=2$, the Fiedler eigenvector (i.e., the eigenvector $u_2(L)$ that corresponds to the second smallest eigenvalue of $L$) is a popular tool to partition the graph. One way to do this is to first put the entries of the Fiedler eigenvector in algebraic order. Then out of all $n-1$ possible linear bisections of the entries we pick the one that gives the smallest ratio cut. This method is equivalent to finding the best linear bisection of the embedded points in $\mathbb{R}^2$. For this rounding method we can let $C=1$ in~\eqref{eq:sep}. To see why, first note that $\tilde{V}$ is the solution to an orthogonal Procrustes problem and therefore has a closed form solution
 	$$\tilde{V}=V_1V_2^T$$
 	where $U^T\Uiso=V_1\Sigma V_2^T$ is the singular value decomposition of $U^T\Uiso$. Given that $u_1(L)=u_1(\Liso)=\frac{1}{\sqrt{n}}\ones_n$, it is easy to check that
 	$$\normtwotoinf{U\tilde{V}-\Uiso}=\norminf{u_2(L)-u_2(\Liso)}$$
 	where the sign of $u_2(L)$ is chosen so that $\inner{u_2(\Liso)}{u_2(L)}>0$. Note that the distance between the two embedded unperturbed clusters is $\sqrt{1/|V_1|+1/|V_2|}$. To ensure the separation of the two clusters in $u_2(L)$ we require
 	$$\norminf{u_2(L)-u_2(\Liso)}<\frac{1}{2}\sqrt{\frac{1}{|V_1|}+\frac{1}{|V_2|}},$$
 	which is guaranteed by $\normtwotoinf{U\tilde{V}-\Uiso}< 1/\sqrt{n}$.
 	
 	\item {\bf An SDP type of k-means algorithm.} The k-means algorithms are a family of algorithms that seek the $k$-way partition $\lc \Gamma_i\rc_{i=1}^k$ of $n$ points in $\mathbb{R}^d$ by minimizing the following k-means objective function:
 	\[\min_{\lc \Gamma_i\rc_{i=1}^k}\sum_{i=1}^k\sum_{x\in\Gamma_i}\norm{x-\mu_i}^2\]
 	where $\mu_i$ is the mean of points in $\Gamma_i$. Note that the optimization is shown to be NP-hard (\cite{mahajan_planar_2009,aloise_np-hardness_2009}) so no polynomial-time algorithm is guaranteed to find the optimal partition in general. The most famous and widely used k-means algorithm is the Lloyd's algorithm (\cite{lloyd_least_1982}). But its heuristic nature and random initial starts make the analysis of exact recovery difficult. Here we consider a SDP type of k-means algorithm proposed in~\cite{li_when_2020}. The proposed algorithm comes with a proximity condition for the planted partition under which the algorithm is guaranteed to recover the planted partition. Let $n_i=|\Gamma_i|$ and $X_i\in\mathbb{R}^{n_i\times d}$ be the data matrix of the $i$-th cluster with each row being a point in $\Gamma_i$. Let $\overline{X}_i=X_i-\ones_{n_i}\mu_i^T$ be the centered data matrix of the $i$-th cluster. For each pair of $i\neq j$, let $M_{i,j}$ denote the bisecting hyperplane that passes through $\frac{\mu_i+\mu_j}{2}$ and is perpendicular to the line segment that joins $\mu_i$ and $\mu_j$. The proximity condition is then stated as follows: for all $i\neq j$, (i) $\Gamma_i$ and $\Gamma_j$ are separated by $M_{i,j}$ and (ii) 
 	$$\xi_{i,j}>\frac{1}{2}\sqrt{\sum_{i=1}^k\norm{\overline{X}_i}^2\lp\frac{1}{n_i}+\frac{1}{n_j}\rp}$$
 	where $\xi_{i,j}=\dist\lc M_{i,j},\Gamma_i\cup\Gamma_j\rc$ is the margin between the clusters and the bisecting hyperplane (see also Figure~\ref{fig:prox}). We now claim that if $C=1/5$ in~\eqref{eq:sep} then the SDP k-means algorithm is guaranteed to recover the planted partition. To see why, note that~\eqref{eq:sep} implies the points in $\Gamma_i$ and $\Gamma_j$, together with their means, are confined within two balls of radius $C/\sqrt{n}$ whose centers are $\sqrt{\frac{1}{n_i}+\frac{1}{n_j}}$ apart. By Lemma~\ref{lem:sep} in Section~\ref{s:lemma} we have
 	$$\xi_{i,j}>\frac{1}{2}\sqrt{\frac{1}{n_i}+\frac{1}{n_j}}-\frac{3C}{\sqrt{n}}\geq\frac{1-3C}{2}\sqrt{\frac{1}{n_i}+\frac{1}{n_j}}.$$
 	And on the other hand 
 	$$\sum_{i=1}^k\norm{\overline{X}_i}^2\leq\sum_{i=1}^k\normfro{\overline{X}_i}^2< 4C^2.$$
 	Therefore if $C\leq 1/5$ then the proximity condition is satisfied.
	\item[]
	\begin{minipage}{\linewidth}
	\centering
	\includegraphics[width=130mm]{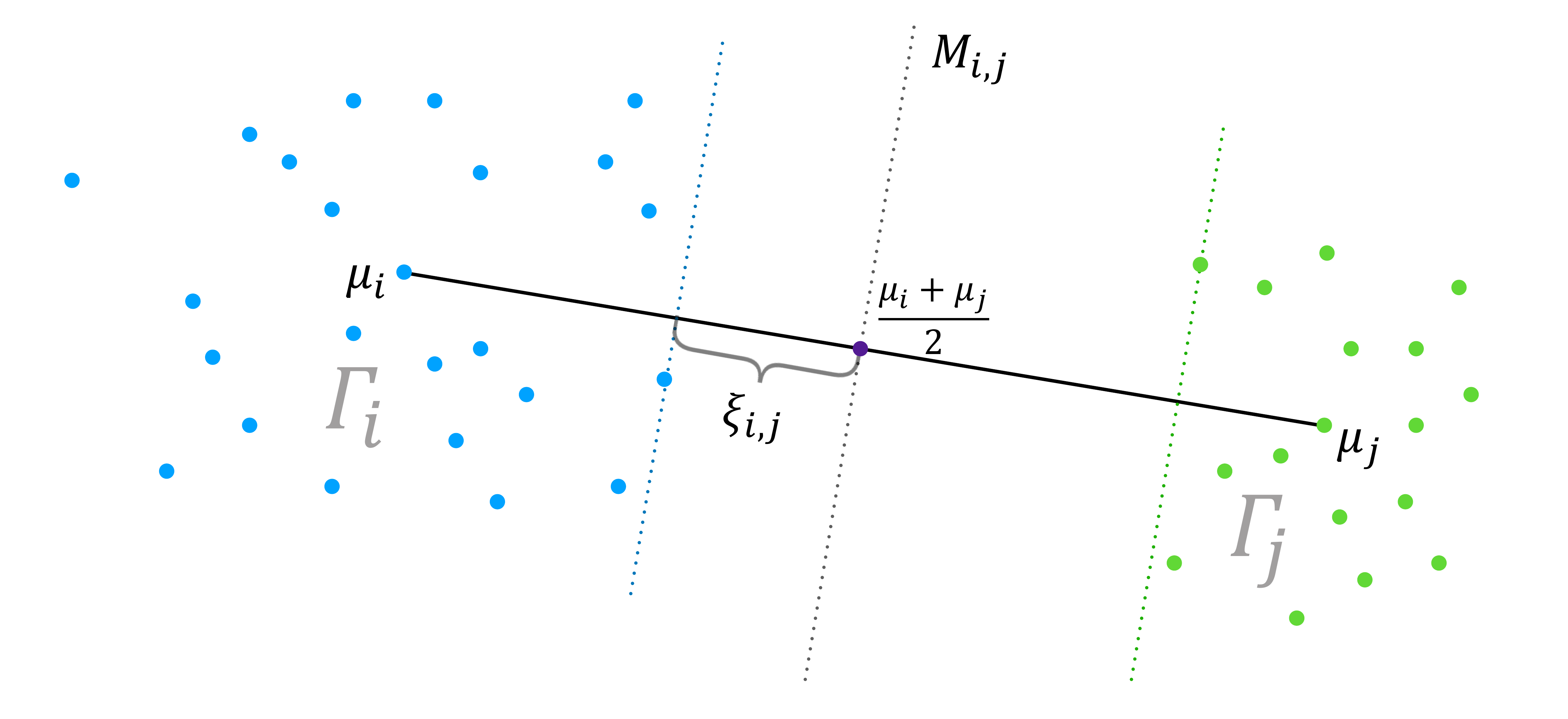}
	\captionof{figure}{Proximity condition for the SDP k-means algorithm. If the partition satisfies the proximity condition, then each pair of clusters $\Gamma_i$ and $\Gamma_j$ are separated by and sufficiently bounded away from the bisecting hyperplane of the line segment that joins $\mu_i$ and $\mu_j$.
	}\label{fig:prox}
	\end{minipage}

 	\item {\bf Two projective k-means algorithms.} In~\cite{kumar_clustering_2010} and~\cite{awasthi_improved_2012} two projective k-means algorithms are proposed which consist of an SVD-based projection followed by iterative Lloyd steps with informed initial starts. By our notation, the algorithm in~\cite{kumar_clustering_2010} is guaranteed to recover the planted partition if for any $i\neq j$,
 	$$\xi_{i,j}>\tilde{C}k\lp\frac{1}{\sqrt{n_i}}+\frac{1}{\sqrt{n_j}}\rp\norm{\overline{W}}$$
 	where $\tilde{C}>0$ is an absolute constant and $\overline{W}=[\overline{X}_1^T,\cdots,\overline{X}_k^T]^T$. The algorithm in~\cite{awasthi_improved_2012} is guaranteed to recover the planted partition if for any $i\neq j$,
 	$$\xi_{i,j}>\frac{1}{2}\tilde{C}\lp\frac{1}{\sqrt{n_i}}+\frac{1}{\sqrt{n_j}}\rp\norm{\overline{W}}$$
 	and
 	$$\norm{\mu_i-\mu_j}>\tilde{C}\sqrt{k}\lp\frac{1}{\sqrt{n_i}}+\frac{1}{\sqrt{n_j}}\rp\norm{\overline{W}}.$$
 	Then $C$ in~\eqref{eq:sep} can be similarly derived for both methods.
 \end{itemize}
Thus, by Theorem~\ref{thm:twoinfbound} and the discussion above, Algorithm~\ref{spectralclustering} finds the planted partition if $r\lesssim 1/\ln n$ where the hidden term depends on $k$ and the unbalanceness term $c$. By Theorem~\ref{thm:optimality} when $r\leq 1/2$, we can certify that the planted partition is optimal. Therefore we may claim that Algorithm~\ref{spectralclustering} finds the optimal partition when $r\lesssim 1/\ln n$.

Note that when the unbalanceness term $c$ gets arbitrarily large, Theorem~\ref{thm:twoinfbound} gets arbitrarily bad. This is to be expected. We illustrate the effect of unbalanceness on the Laplacian eigenmap using a simple numerical example. Consider $|V_1| = 3$ and $|V_2|=|V_3|=300$. Let $W_i=J_{|V_i|\times|V_i|}/|V_i|$ so $\lambda_{2}(L_i)=1$ for $1\leq i\leq 3$. We perturb the graph by adding a weight 0.5 between a vertex in $V_1$ and a vertex in $V_2$. We then add another weight 0.5 between a vertex in $V_3$ and another vertex in $V_2$. The Laplacian eigenmap of both the unperturbed and the perturbed graphs are shown in Figure~\ref{fig}. As seen from the figure, $ \min_{V\in{\bf O}^k}\normtwotoinf{UV-\Uiso}$ is large and will be even more so as the clusters get more unbalanced. Despite the error being large, the Laplacian eigenmap still separates the three clusters well. The reason being that $ \min_{V\in{\bf O}^k}\normtwotoinf{UV-\Uiso}$ only measures the magnitude and thus ignores the direction of the perturbation. A more refined analysis on the Laplacian eigenmap should consider the direction of the perturbation as well as the magnitude. It remains an open problem whether there exists a constant $C$ such that $r\leq C$ implies successful recovery of the planted clusters by Algorithm~\ref{spectralclustering}. The constant $C$ should only depend on $k$ or, better yet,  not even on $k$.   
\begin{figure}[h!]
	
	\begin{subfigure}[h]{0.5\linewidth}
		\includegraphics[width=\linewidth]{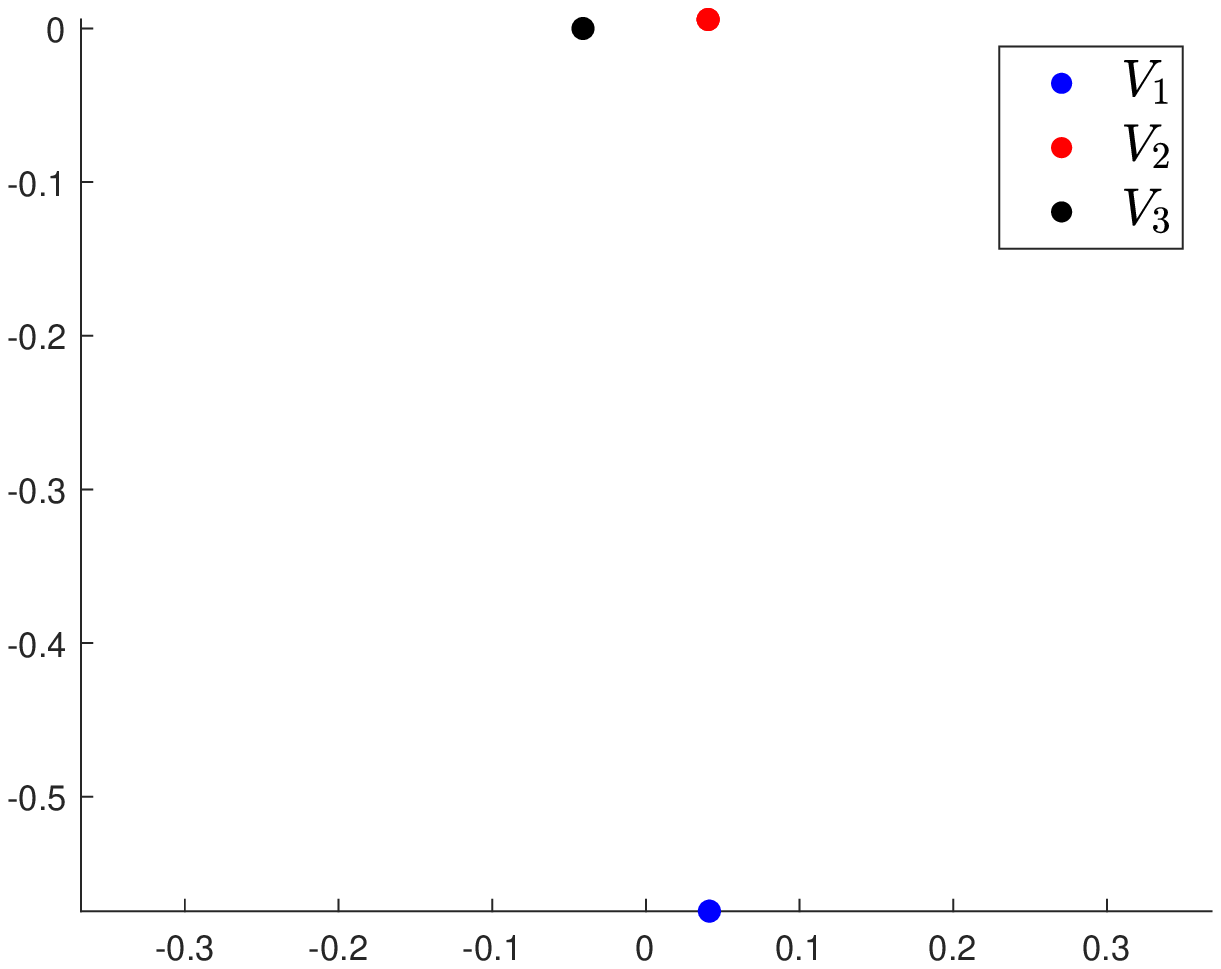}
		\caption{Unperturbed graph}
	\end{subfigure}
	\hfill
	\begin{subfigure}[h]{0.5\linewidth}
		\includegraphics[width=\linewidth]{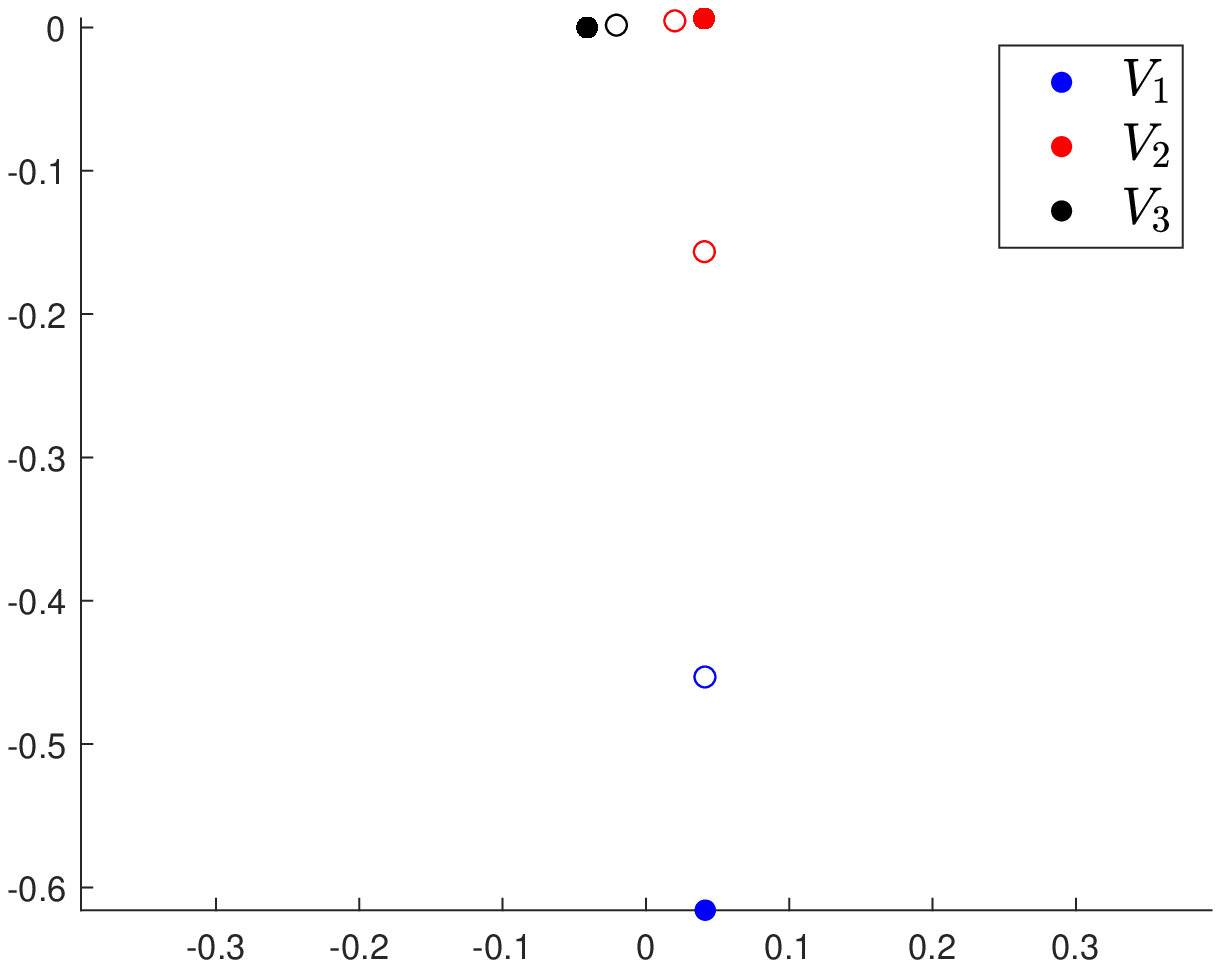}
		\caption{Perturbed graph}
	\end{subfigure}%
	
	\caption{Laplacian eigenmap for both the unperturbed and the perturbed graphs. The embedded points live on a common plane in $\mathbb{R}^3$ so we can visualize them in $\mathbb{R}^2$. The four empty circles are the four perturbed vertices.}
	\label{fig}
\end{figure}

\section{Proofs}\label{s:proofs}

\subsection{Proofs for Section~\ref{s:main1}}
\begin{proof}[\bf Proof of Lemma~\ref{lem:density}]
	Consider $M=L-\lambda_2(L)P$ where $P=I-\frac{1}{|V|}J_{|V|\times |V|}$. Then $M$ is positive semi-definite. Therefore
	$$\ones_S^TM\ones_S=\ones_S^TL\ones_S-\lambda_2(L)\ones_S^TP\ones_S=\cut{S,V-S}-\lambda_2(L)\frac{|S|\cdot|V-S|}{|V|}\geq 0$$
\end{proof}
\begin{proof}[\bf Proof of Theorem~\ref{thm:optimality}]
	Suppose without loss of generality that the partition $\kpartition$ satisfies 
	$$\max_{1\leq i\leq n}\ddi\leq\frac{1}{2}\;\;\;\text{and}\;\;\;\min_{1\leq i\leq k}\lambda_2(L_i)\geq 1.$$
	Let $\kpartitionnew$ be another partition of $V$. We aim to show
	$$\ratiocut{\kpartition}\leq\ratiocut{\kpartitionnew}.$$
	Let $\ni=|V_i|$, $\nj=|V^{(j)}|$ and $\mij=|V_i\cap V^{(j)}|$. We have
	$$\sumi\mij=\nj,\;\;\;\sumj\mij=\ni.$$
	Let
	$$\Aiipjjp=\sum_{v_k\in V_i\cap V^{(j)},v_l\in V_{i'}\cap V^{(j')}}w_{kl}.$$
	Thus we have divided the weighted adjacency matrix $W$ into $k^2\times k^2$ rectangular areas and $\Aiipjjp$ is the total weight in one of the areas. Lemma~\ref{lem:density} gives 
	\begin{equation}
	\sumjpneqj\Aiijjp\geq\min_{1\leq a\leq k}\lambda_2(L_a)\frac{\mij(\ni-\mij)}{\ni}\geq \frac{\mij(\ni-\mij)}{\ni} \label{1}
	\end{equation}
	for all $i,j$. We also know each $\ddi$ is at most $1/2$. This implies 
	\begin{equation}
	\sumipneqi\Aiipjj+\sumjpneqj\sumipneqi\Aiipjjp=\sumjp\sumipneqi\Aiipjjp\leq\frac{1}{2}\mij \label{2}
	\end{equation}
	for all $i,j$. Moreover 
	\begin{equation}
	\sumipneqi\Aiipjj\leq \frac{1}{2}\minn \label{3}
	\end{equation}
	for all $i,j$ because the summands together represent a rectangular area in $W$ with length $\mij$ and width $\nj-\mij$. Therefore we need to show
	\begin{align*}
	&\ratiocut{\kpartition}-\ratiocut{\kpartitionnew}\\
	=&\sumj\sumjp\sumi\sumipneqi\frac{1}{\ni}\Aiipjjp-\sumi\sumip\sumj\sumjpneqj\frac{1}{\nj}\Aiipjjp\\
	=&\sumi\sumj\sumipneqi\frac{1}{\ni}\Aiipjj+\sumi\sumj\left( \frac{1}{\ni}-\frac{1}{\nj}\right)\lp\sumjpneqj\sumipneqi\Aiipjjp\rp-\sumi\sumj\sumjpneqj\frac{1}{\nj}\Aiijjp\\
	=&[1]+[2]-[3]\leq 0.
	\end{align*}
	For [1],
	\begin{align*}
	[1]=&\sumi\sumj\sumipneqi\frac{1}{\ni}\Aiipjj\\
	=&\sumi\sumj\sumipneqi\frac{1}{\ni}\Aiipjj-\frac{1}{2}\sumi\sumj\frac{1}{\ni}\minn\\
	&+\frac{1}{2}\sumi\sumj\frac{1}{\ni}\minn\\
	=&[1.1]-[1.2]+[1.3].
	\end{align*}
	For [2], when $\frac{1}{\ni}-\frac{1}{\nj}\leq 0$, the summand is upper bounded by 0. When $\frac{1}{\ni}-\frac{1}{\nj}> 0$, the summand is upper bounded by (by using~\eqref{2}) 
	$$\lp\frac{1}{\ni}-\frac{1}{\nj}\rp\lp\sumjpneqj\sumipneqi\Aiipjjp\rp\leq\lp\frac{1}{\ni}-\frac{1}{\nj}\rp\lp\frac{1}{2}\mij-\sumipneqi\Aiipjj\rp.$$
	Therefore we can bound [2] by
	\begin{align*}
	[2]&=\sumi\sumj\left( \frac{1}{\ni}-\frac{1}{\nj}\right)\lp\sumjpneqj\sumipneqi\Aiipjjp\rp\\
	&\leq \sumi\sumj\max\lc\frac{1}{\ni}-\frac{1}{\nj},0\rc\lp\frac{1}{2}\mij-\sumipneqi\Aiipjj\rp\\
	&=\sumi\sumj\lp\frac{1}{\ni}-\min\lc\frac{1}{\nj},\frac{1}{\ni}\rc\rp\lp\frac{1}{2}\mij-\sumipneqi\Aiipjj\rp\\
	&\leq\frac{1}{2}\sumi\sumj\frac{1}{\ni}\mij-\sumi\sumj\sumipneqi\frac{1}{\ni}\Aiipjj-\frac{1}{2}\sumi\sumj\min\lc\frac{1}{\nj},\frac{1}{\ni}\rc\max\lc 0,2\mij-\nj\rc\\
	&=[2.1]-[2.2]-[2.3],
	\end{align*}
	where in the last step we used~\eqref{3}. For [3], we use~\eqref{1}:
	\begin{align*}
	[3]=\sumi\sumj\sumjpneqj\frac{1}{\nj}\Aiijjp&\geq\sumi\sumj\min_{1\leq a\leq k}\lambda_2(L_a)\frac{\mij(\ni-\mij)}{\ni\nj}\\
	&\geq \sumi\sumj\frac{\mij(\ni-\mij)}{\ni\nj}.\numberthis\label{eq:4.4}
	\end{align*}
	Therefore (here we introduce the shorthand notation $\min^*$ for $\minn$ and $\max^*$ for $\maxx$)
	\begin{align*}
	[1.3]-[3]&\leq\sumi\sumj\lp\frac{1}{2\ni}\minn-\frac{\mij(\ni-\mij)}{\ni\nj}\rp\\
	&=\sumi\sumj\lp\frac{\min^*(\min^*+\max^*)}{2\ni\nj}-\frac{\mij(\ni-\mij)}{\ni\nj}\rp\\
	&=\sumi\sumj\lp\frac{\min^*\max^*}{\ni\nj}-\frac{\min^*(\max^*-\min^*)}{2\ni\nj}-\frac{\mij(\ni-\mij)}{\ni\nj}\rp\\
	&=\sumi\sumj\lp\frac{\mij(\nj-\mij)}{\ni\nj}-\frac{\mij(\ni-\mij)}{\ni\nj}\rp-\frac{1}{2}\sumi\sumj\frac{\min^*(\max^*-\min^*)}{\ni\nj}\\
	&=\sumi\sumj\lp\frac{\mij(\nj-\ni)}{\ni\nj}\rp-[4]\\
	&=\sumi\sumj\frac{\mij}{\ni}-\sumj\sumi\frac{\mij}{\nj}-[4]\\
	&=k-k-[4]=-[4].
	\end{align*}
	The cancellation above is the reason why the constant in the condition~\eqref{cond:sep} is 1/2. We also have [1.1]-[2.2]=0. Finally
	\begin{align*}
	&[1]+[2]-[3]\\
	\leq&[2.1]-[1.2]-[4]-[2.3]\\
	=&\frac{1}{2}\sumi\sumj\lp\frac{\mij}{\ni}-\frac{\min^*}{\ni}-\frac{\min^*(\max^*-\min^*)}{\ni\nj}-\min\lc\frac{1}{\nj},\frac{1}{\ni}\rc\max\lc 0,2\mij-\nj\rc\rp\\
	=&\frac{1}{2}\sumi\sumj\lp\frac{\mij}{\ni}-\frac{\min^*(\min^*+\max^*)}{\ni\nj}-\frac{\min^*(\max^*-\min^*)}{\ni\nj}-\min\lc\frac{1}{\nj},\frac{1}{\ni}\rc\max\lc 0,2\mij-\nj\rc\rp\\
	=&\frac{1}{2}\sumi\sumj\lp\frac{\mij}{\ni}-\frac{2\min^*\max^*}{\ni\nj}-\min\lc\frac{1}{\nj},\frac{1}{\ni}\rc\max\lc 0,2\mij-\nj\rc\rp\\
	=&\frac{1}{2}\sumi\sumj\lp\frac{\mij\nj}{\ni\nj}-\frac{2\mij(\nj-\mij)}{\ni\nj}-\min\lc\frac{1}{\nj},\frac{1}{\ni}\rc\max\lc 0,2\mij-\nj\rc\rp\\
	=&\frac{1}{2}\sumi\sumj\lp\frac{\mij(2\mij-\nj)}{\ni\nj}-\min\lc\frac{1}{\nj},\frac{1}{\ni}\rc\max\lc 0,2\mij-\nj\rc\rp\\
	\leq& 0.
	\end{align*}
	The last step is because for each summand,
	$$\frac{\mij}{\ni\nj}\leq\min\lc\frac{1}{\nj},\frac{1}{\ni}\rc.$$
	This concludes the proof that $\kpartition$ achieves the minimum ratio cut. To show that it is also the unique global minimum when the strict inequality holds, we assume $\kpartition$ satisfies 
	$$\max_{1\leq i\leq n}\ddi\leq\frac{1}{2}\;\;\;\text{and}\;\;\;\min_{1\leq i\leq k}\lambda_2(L_i)> 1.$$
	All claims above still hold but we will show~\eqref{eq:4.4} holds with the strict inequality. Since $\min_{1\leq i\leq k}\lambda_2(L_i)> 1$, the last inequality in~\eqref{eq:4.4} takes equal sign if and only if $\mij(\ni-\mij)=0$ for all $i,j$. But this is impossible if $\kpartitionnew$ is not a relabeling of $\kpartition$. Therefore if $\kpartitionnew$ is not a relabeling of $\kpartition$, we have
	$$[1]+[2]-[3]<0$$
	which concludes the proof.
\end{proof}

\subsection{Proofs for Section~\ref{s:main2}}
We need the following two lemmas to prove Theorem~\ref{lowerbound}. For a linear transformation $T$ on a finite dimensional vector space, $\lambda_{\max}(T)$ and $\lambda_{\min}(T)$ denote the largest and the smallest eigenvalue of $T$, respectively.
\begin{lemma}\label{Tk}
	Let $T$ be an $n\times n$ matrix such that $\norminf{T}\leq 1$. Then
	\[\|x-T^{k}x\|_{\infty}\leq k\|x-Tx\|_{\infty},\]
	for all $x\in\mathbb{C}^{n}$ and $k\in\mathbb{N}$.
\end{lemma}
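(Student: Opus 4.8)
The plan is to use a telescoping decomposition together with the submultiplicativity of the induced $\ell^{\infty}$ norm. First I would write
\[
x - T^{k}x = \sum_{j=0}^{k-1}\bigl(T^{j}x - T^{j+1}x\bigr) = \sum_{j=0}^{k-1} T^{j}(x - Tx),
\]
which is just a telescoping sum; equivalently it follows from the factorization $I - T^{k} = (I-T)\sum_{j=0}^{k-1}T^{j}$ applied to $x$, using that $I-T$ commutes with the partial geometric sum.

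Next I would apply the triangle inequality in the $\ell^{\infty}$ norm and then the fact that $\norminf{\cdot}$ is submultiplicative, being an operator norm:
\[
\|x - T^{k}x\|_{\infty} \leq \sum_{j=0}^{k-1}\|T^{j}(x-Tx)\|_{\infty} \leq \sum_{j=0}^{k-1}\norminf{T^{j}}\,\|x-Tx\|_{\infty} \leq \sum_{j=0}^{k-1}\norminf{T}^{j}\,\|x-Tx\|_{\infty}.
\]
Since $\norminf{T}\leq 1$ we have $\norminf{T}^{j}\leq 1$ for every $j$, so the right-hand side is at most $k\,\|x-Tx\|_{\infty}$, which is the claim. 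An induction on $k$ works equally well: writing $x - T^{k+1}x = (x-T^{k}x) + T^{k}(x-Tx)$ and using $\norminf{T^{k}}\leq\norminf{T}^{k}\leq 1$ advances the induction.

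I do not expect any genuine obstacle here — this is a warm-up estimate. The only point to keep straight is that $\norminf{\cdot}$ denotes the $\ell^{\infty}\to\ell^{\infty}$ induced matrix norm, so the bounds $\|T^{j}v\|_{\infty}\leq\norminf{T^{j}}\|v\|_{\infty}$ and $\norminf{T^{j}}\leq\norminf{T}^{j}$ are exactly the standard operator-norm facts, and the hypothesis $\norminf{T}\leq 1$ is used precisely to discard these powers. I anticipate the lemma will be invoked in the proof of Theorem~\ref{lowerbound} with $T$ built from the diagonally dominant matrix $B$ (a suitably scaled $I - tB$, say), for which the row conditions $B_{ii}\geq\sum_{j\neq i}|B_{ij}|$ translate into $\norminf{T}\leq 1$.
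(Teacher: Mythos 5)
Your proposal is correct and is essentially identical to the paper's own argument: the same telescoping decomposition $x - T^{k}x = \sum_{j=0}^{k-1}T^{j}(x-Tx)$ followed by the triangle inequality and $\norminf{T^{j}}\leq\norminf{T}^{j}\leq 1$. No issues.
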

\begin{proof}
	We have
	\begin{eqnarray*}
		\|x-T^{n}x\|_{\infty}&\leq&\|x-Tx\|_{\infty}+\|Tx-T^{2}x\|_{\infty}+\ldots+\|T^{k-1}x-T^{k}x\|_{\infty}\\&=&
		\|x-Tx\|_{\infty}+\|T(x-Tx)\|_{\infty}+\ldots+\|T^{k-1}(x-Tx)\|_{\infty}\leq k\|x-Tx\|_{\infty},
	\end{eqnarray*}
	where the last inequality follows from $\norminf{T}\leq 1$.
\end{proof}
\begin{lemma}\label{differencelowerbound}
	Let $T$ be a self-adjoint $n\times n$ matrix, $n\geq 3$, such that $\norminf{T}\leq 1$. Let $\mathcal{M}$ be a subspace of $\mathbb{C}^{n}$ such that $T\mathcal{M}\subset\mathcal{M}$. Then
	\[\|x-Tx\|_{\infty}\geq\frac{(1-\lambda_{\max}(T|_{\mathcal{M}}))\|x\|_{\infty}}{2\ln n},\]
	for all $x\in\mathcal{M}$.
\end{lemma}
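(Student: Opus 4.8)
The idea is to test $x$ against powers of a shifted copy of $T$ and play an $\ell^{\infty}$ estimate off against an $\ell^2$ estimate. If $\lambda:=\lambda_{\max}(T|_{\mathcal M})\ge 1$ the right‑hand side is $\le 0$ and there is nothing to prove, so I assume $\lambda<1$; likewise I assume $x\neq 0$ and normalize $\|x\|_{\infty}=1$. Note first that self‑adjointness together with $\norminf{T}\le 1$ forces $\norm{T}\le(\normone{T}\,\norminf{T})^{1/2}=\norminf{T}\le 1$, so all eigenvalues of $T$, and in particular those of $T|_{\mathcal M}$, lie in $[-1,1]$. Now replace $T$ by $S:=\tfrac12(I+T)$. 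Then $S$ is self‑adjoint, $\norminf{S}\le\tfrac12(\norminf{I}+\norminf{T})\le 1$, $S\mathcal M\subseteq\mathcal M$, and $x-Sx=\tfrac12(x-Tx)$. The point of the shift is that the eigenvalues of $S|_{\mathcal M}$ now lie in $[0,\rho]$ with $\rho:=\tfrac12(1+\lambda)\in[0,1)$; in particular $S|_{\mathcal M}$ is a self‑adjoint contraction of spectral radius exactly $\rho$, so $\norm{S^{k}y}\le\rho^{k}\norm{y}$ for every $y\in\mathcal M$.

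Next I run the two estimates on $S^{k}x$. Applying Lemma~\ref{Tk} to $S$ (legitimate since $\norminf{S}\le 1$) gives $\norminf{x-S^{k}x}\le k\norminf{x-Sx}=\tfrac k2\norminf{x-Tx}$, hence
\[\norminf{S^{k}x}\ \ge\ 1-\tfrac k2\norminf{x-Tx}.\]
On the other hand, since $x\in\mathcal M$,
\[\norminf{S^{k}x}\ \le\ \norm{S^{k}x}\ \le\ \rho^{k}\norm{x}\ \le\ \sqrt n\,\rho^{k}.\]
Combining, $1-\sqrt n\,\rho^{k}\le\tfrac k2\norminf{x-Tx}$ for every integer $k\ge 1$, i.e.
\[\norminf{x-Tx}\ \ge\ \frac{2\bigl(1-\sqrt n\,\rho^{k}\bigr)}{k}\qquad\text{for all }k\ge 1.\]
(Equivalently, one may expand $x=\tfrac12\sum_{k\ge 0}S^{k}(x-Tx)$ through the Neumann series for $(I-S|_{\mathcal M})^{-1}$ — which converges because $\rho<1$ — and bound the $k$‑th term by $\min\{1,\sqrt n\,\rho^{k}\}\,\norminf{x-Tx}$, giving $\norminf{x-Tx}\ge 2\norm x_{\infty}/\sum_{k\ge 0}\min\{1,\sqrt n\,\rho^{k}\}$; this carries the same information.)

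It then remains to choose $k$. Taking $k$ to be roughly the least integer with $\sqrt n\,\rho^{k}\le\tfrac12$ — that is, $k\approx\ln(2\sqrt n)/(-\ln\rho)$ — makes the numerator $\ge\tfrac12$, so $\norminf{x-Tx}\ge 1/k$, and then the elementary bound $-\ln\rho=-\ln\!\bigl(1-\tfrac{1-\lambda}{2}\bigr)\ge\tfrac{1-\lambda}{2}$ converts $1/k$ into a bound of the shape $\tfrac{1-\lambda}{c\ln n}$. I expect the real work to be exactly here: pinning the constant down to $c=2$ and, in particular, making the estimate valid for $n$ as small as $3$. The crude bookkeeping above — rounding $k$ up to an integer, and the slack in $-\ln\rho\ge\tfrac{1-\lambda}{2}$, which is wasteful precisely when $\lambda$ is far from $1$ — only gives the conclusion for $n$ above some absolute constant strictly larger than $3$. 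Closing this gap presumably needs a sharper choice of $k$, or a short case split on the size of $1-\lambda$ (e.g.\ using the stronger $-\ln\rho\ge\tfrac{1-\lambda}{1+\lambda}$ when $\lambda$ is small, and simply taking $k=1$ when $\lambda$ is very negative), together with an exact rather than asymptotic evaluation of the geometric tail $\sum_{k}\min\{1,\sqrt n\,\rho^{k}\}$. The conceptual reason the answer has $\ln n$ rather than the $\sqrt n$ of the trivial estimate is that the exponential decay $\rho^{k}$ overtakes the $\sqrt n$ norm‑equivalence factor after only logarithmically many steps.
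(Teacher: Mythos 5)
Your argument is the paper's argument: the same shift to $S=\tfrac12(I+T)$ to make the operator positive semidefinite, the same telescoping bound from Lemma~\ref{Tk}, the same comparison of $\|S^kx\|_{\infty}\ge \|x\|_{\infty}-\tfrac k2\|x-Tx\|_{\infty}$ against $\|S^kx\|_{\infty}\le\sqrt n\,\rho^k\|x\|_{\infty}$, and the same choice of $k$ via $\rho\le e^{\rho-1}$ (your $-\ln\rho\ge\tfrac{1-\lambda}{2}$ is the same inequality). The only thing you leave open --- getting the constant down to $2$ for all $n\ge 3$ --- is exactly where the paper does a case split, but the split is on $n$, not on $1-\lambda$ as you propose, and it is simpler than you anticipate. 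Taking $k$ to be the least integer $\ge\ln(2\sqrt n)/(1-\rho)$ and using $1-\rho\le 1$ to absorb the rounding, the requirement $k\le\ln n/(1-\rho)$ reduces to $\tfrac12\ln n\ge 1+\ln 2$, i.e.\ $n\ge 30$. For $3\le n\le 29$ one does not need the iteration at all: the trivial chain $\|x-Tx\|_{\infty}\ge n^{-1/2}\|x-Tx\|_{2}\ge n^{-1/2}\bigl(1-\lambda_{\max}(T|_{\mathcal M})\bigr)\|x\|_{2}\ge n^{-1/2}\bigl(1-\lambda_{\max}(T|_{\mathcal M})\bigr)\|x\|_{\infty}$ already beats the target because $\sqrt n\le 2\ln n$ in that range. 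So your proof is correct in substance and identical in method; appending this two-line case split closes the acknowledged gap.
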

\begin{proof}
	We may assume that $T$ is positive semidefinite. Indeed, $\frac{I+T}{2}$ is positive semidefinite, and if the result holds with $T$ being replaced by $\frac{I+T}{2}$, the result will hold for $T$.
	
	Since $T$ is positive semidefinite,
	\[\|T^{k}x\|_{2}\leq\lambda_{\max}(T|_{\mathcal{M}})^{k}\|x\|_{2},\]
	for all $x\in\mathcal{M}$ and $k\in\mathbb{N}$. So $\|T^{k}x\|_{\infty}\leq\lambda_{\max}(T|_{\mathcal{M}})^{k}\sqrt{n}\|x\|_{\infty}$ and hence, by Lemma \ref{Tk},
	\[\|x\|_{\infty}-\lambda_{\max}(T|_{\mathcal{M}})^{k}\sqrt{n}\|x\|_{\infty}\leq k\|x-Tx\|_{\infty},\]
	for all $x\in\mathcal{M}$ and $k\in\mathbb{N}$. If $k$ is large enough so that $\lambda_{\max}(T|_{\mathcal{M}})^{k}\sqrt{n}\leq\frac{1}{2}$, then $\|x-Tx\|_{\infty}\geq\frac{1}{2k}\|x\|_{\infty}$. Since $\norminf{T}\leq 1$, we have $\lambda_{\max}(T|_{M})\leq 1$. Note that $\lambda\leq e^{\lambda-1}$ for all $\lambda\leq 1$. So $\lambda_{\max}(T|_{\mathcal{M}})^{k}\sqrt{n}\leq e^{k(\lambda_{\max}(T|_{\mathcal{M}})-1)}\sqrt{n}\leq\frac{1}{2}$ for $k\geq\frac{\ln(2\sqrt{n})}{1-\lambda_{\max}(T|_{\mathcal{M}})}$. Taking $k$ to be the smallest integer larger than or equal to $\frac{\ln(2\sqrt{n})}{1-\lambda_{\max}(T|_{\mathcal{M}})}$, we obtain
	\[\|x-Tx\|_{\infty}\geq\frac{1}{2k}\|x\|_{\infty}\geq\frac{(1-\lambda_{\max}(T|_{\mathcal{M}}))\|x\|_{\infty}}{2\ln n},\]
	if $n\geq 30$. If $3\leq n\leq 29$, then
	\[\|x-Tx\|_{\infty}\geq\frac{1}{\sqrt{n}}\|x-Tx\|_{2}\geq\frac{1}{\sqrt{n}}(1-\lambda_{\max}(T|_{\mathcal{M}}))\|x\|_{2}\geq\frac{(1-\lambda_{\max}(T|_{\mathcal{M}}))\|x\|_{\infty}}{2\ln n},\]
	for all $x\in\mathcal{M}$.
\end{proof}
\begin{proof}[\bf Proof of Theorem~\ref{lowerbound}]
	Without loss of generality, we may assume that $B_{i,i}\leq 1$ for all $1\leq i\leq n$. For every $1\leq i\leq n$,
	\[\sum_{j=1}^{n}|(I-B)_{i,j}|=1-B_{i,i}+\sum_{j\in\{1,\ldots,n\}\backslash\{i\}}|B_{i,j}|\leq1-B_{i,i}+B_{i,i}\leq 1.\]
	So $\norminf{I-B}\leq 1$. By Lemma \ref{differencelowerbound}, we have
	\[\|Bx\|_{\infty}=\|x-(I-B)x\|_{\infty}\geq\frac{(1-\lambda_{\max}((I-B)|_{\mathcal{M}}))\|x\|_{\infty}}{2\ln n}=\frac{\lambda_{\min}(B|_{\mathcal{M}})\|x\|_{\infty}}{2\ln n},\]
	for all $x\in\mathcal{M}$.
\end{proof}
\begin{proof}[\bf Proof of Corollary~\ref{graph}]
	Since $L\ones_n=0$ and $L$ is self-adjoint, $L\{\ones_n\}^{\perp}\subset\{\ones_n\}^{\perp}$. By Theorem~\ref{lowerbound},
	\[\|Lx\|_{\infty}\geq\frac{\lambda_{\min}\lp L|_{\{\ones_n\}^{\perp}}\rp\|x\|_{\infty}}{2\ln n}=\frac{\lambda_{2}(L)\|x\|_{\infty}}{2\ln n},\]
	for all $x\perp \ones_n$. This proves one inequality.
	
	To prove the other inequality, pick a vertex $u_{0}$. Let $y\in\mathbb{C}^{n}$ be given by $y(v)=d(u_{0},v)$, for vertices $v$, where $d$ is the graph distance. Then
	\[(Ly)(v)=\mathrm{deg}(v)d(u_{0},v)-\sum_{w\in N(v)}d(u_{0},w),\]
	for all vertex $v$, where $N(v)$ is the set of all neighborhood vertices of $v$. Since $|d(u_{0},v)-d(u_{0},w)|\leq 1$ for all $w\in N(v)$, we have $|(Ly)(v)|\leq\mathrm{deg}(v)$ for all vertex $v$. So $\|Ly\|_{\infty}\leq M$.
	
	Let $z=y-(\frac{1}{n}\sum_{v}y(v))\ones_n\in\mathbb{C}^{n}$, where the sum is over all vertices $v$. It is easy to see that there exists a vertex $w$ such that $d(u_{0},w)\geq\frac{D}{2}$, where $D$ is the diameter. So $|z(w)-z(u_{0})|=|y(w)-y(u_{0})|=|d(u_{0},w)-0|=d(u_{0},w)\geq\frac{D}{2}$. Thus, $\|z\|_{\infty}\geq\frac{D}{4}$. Since $L\ones_n=0$, we have $\|Lz\|_{\infty}=\|Ly\|_{\infty}\leq M$. Therefore,
	\[\inf_{x\perp\ones_n}\frac{\|Lx\|_{\infty}}{\|x\|_{\infty}}\leq\frac{M}{D/4}=\frac{4M}{D}.\]
	The result follows.
\end{proof}
With the help of Corollary~\ref{graph}, we can prove Theorem~\ref{thm:twoinfbound}.
\begin{proof}[\bf Proof of Theorem~\ref{thm:twoinfbound}]
	We use the same notation as Lemma~\ref{lem:twoinfbound}. Note that $\normtwotoinf{\Uiso}=\max_{1\leq i\leq k}1/\sqrt{|V_i|}$ so $\mu=\sqrt{c}$. By Corollary~\ref{graph} and~\eqref{eq:gap} we have 
	$$\gap\geq\min_{1\leq i\leq k}\frac{\lambda_2(L_i)}{2\ln |V_i|}\geq\frac{\min_{1\leq i\leq k}\lambda_2(L_i)}{2\ln n}.$$
	If
	$$r = \frac{\max_{1\leq i\leq n}\ddi}{\min_{1\leq i\leq k}\lambda_2(L_i)}\leq\frac{1}{16(1+c)\ln n},$$
	then 
	$$\norminf{\Ld}=2\max_{1\leq i\leq n}\ddi\leq\frac{\gap}{4(1+\mu^2)}$$
	and
	$$\norm{\Ld}\leq\sqrt{\norminf{\Ld}||\Ld||_1}=\norminf{\Ld}=2\max_{1\leq i\leq n}\ddi\leq\frac{\gap}{5}.$$
	Therefore by Lemma~\ref{lem:twoinfbound} we have
	\begin{align*}
		\min_{V\in{\bf O}^k}\normtwotoinf{UV-\Uiso}&\leq 8\normtwotoinf{\Uiso}\lp\frac{\norm{\Ld}}{\sep_2(\Lambda_1,\Lambda_2)}\rp^2+4\frac{\normtwotoinf{U_2U_2^T\Ld\Uiso}}{\gap}\\
		&\leq 8\sqrt{\frac{c}{n}}\lp\frac{2\max_{1\leq i\leq n}\ddi}{\min_{1\leq i\leq k}\lambda_2(L_i)}\rp^2+\frac{8\ln n\normtwotoinf{U_2U_2^T\Ld\Uiso}}{\min_{1\leq i\leq k}\lambda_2(L_i)}\\
		&=32\sqrt{c}r^2\frac{1}{\sqrt{n}}+\frac{8\ln n\normtwotoinf{U_2U_2^T\Ld\Uiso}}{\min_{1\leq i\leq k}\lambda_2(L_i)},
	\end{align*}
	where we have used~\eqref{eq:sepdiag} in the second step.
	Finally 
	\begin{align*}
	\normtwotoinf{U_2U_2^T\Ld\Uiso}&=\normtwotoinf{(I-\Uiso\Uiso^T)\Ld\Uiso}\\
	 &=\normtwotoinf{\Ld\Uiso-\Uiso\Uiso^T\Ld\Uiso}\\
	 &\leq\lp\norminf{\Ld}+\norm{\Uiso^T\Ld\Uiso}\rp\normtwotoinf{\Uiso}\\
	 &\leq\lp\norminf{\Ld}+\norm{\Ld}\rp\normtwotoinf{\Uiso}\\
	 &\leq 4\sqrt{\frac{c}{n}}\max_{1\leq i\leq n}\ddi.
	\end{align*}
	Hence
	$$\min_{V\in{\bf O}^k}\normtwotoinf{UV-\Uiso}\leq32\sqrt{c}\lp r^2+r\ln n\rp\frac{1}{\sqrt{n}}.$$
\end{proof}

\subsection{A lemma for Section~\ref{s:discussion}}\label{s:lemma}
\begin{lemma}\label{lem:sep}
	Let $c_1,c_2\in\mathbb{R}^n$ such that $\norm{c_1-c_2}=d$. For any $x\in B_{c_1}(r)$ and $y\in B_{c_2}(r)$ let $M$ be the $(n-1)$-dimensional bisecting hyperplane that passes through $\frac{x+y}{2}$ and is perpendicular to the line segment that joins $x$ and $y$. Then 
	$$\dist\lc M, B_{c_1}(r)\cup B_{c_2}(r)\rc\geq \frac{1}{2}d-3r.$$
\end{lemma}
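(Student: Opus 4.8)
The plan is to reduce the statement to a bound on the distance from a single point to the perpendicular bisecting hyperplane of a segment, and then close the estimate with two applications of the triangle inequality. By symmetry it suffices to bound $\dist(p,M)$ from below for an arbitrary $p\in B_{c_1}(r)$, since the case $p\in B_{c_2}(r)$ is identical after swapping the roles of $c_1$ and $c_2$. Moreover, we may assume $d\geq 6r$, for otherwise the asserted lower bound $\tfrac12 d-3r$ is nonpositive and there is nothing to prove.

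First I would record the elementary distance formula for a perpendicular bisector. Since $M$ passes through $m=\tfrac{x+y}{2}$ and has unit normal $u=\tfrac{x-y}{\norm{x-y}}$, the signed distance of a point $p$ to $M$ equals $\inner{p-m}{u}$, and expanding $\inner{p-\tfrac{x+y}{2}}{x-y}$ gives the identity
\[
\dist(p,M)=\frac{\bigl|\,\norm{p-y}^2-\norm{p-x}^2\,\bigr|}{2\norm{x-y}}
=\frac{\bigl|\,\norm{p-y}-\norm{p-x}\,\bigr|\,\bigl(\norm{p-y}+\norm{p-x}\bigr)}{2\norm{x-y}}.
\]
Because $\norm{p-y}+\norm{p-x}\geq\norm{x-y}$ by the triangle inequality, this immediately yields the clean bound $\dist(p,M)\geq\tfrac12\bigl|\,\norm{p-y}-\norm{p-x}\,\bigr|$.

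Next I would estimate the two distances using the hypotheses $x,p\in B_{c_1}(r)$ and $y\in B_{c_2}(r)$: the triangle inequality gives $\norm{p-x}\leq\norm{p-c_1}+\norm{c_1-x}\leq 2r$ and $\norm{p-y}\geq\norm{c_1-c_2}-\norm{p-c_1}-\norm{y-c_2}\geq d-2r$. Under the running assumption $d\geq 6r$ we have $d-2r\geq 2r\geq\norm{p-x}$, so $\norm{p-y}\geq\norm{p-x}$ and therefore
\[
\dist(p,M)\ \geq\ \tfrac12\bigl(\norm{p-y}-\norm{p-x}\bigr)\ \geq\ \tfrac12\bigl((d-2r)-2r\bigr)\ =\ \tfrac12 d-2r\ \geq\ \tfrac12 d-3r.
\]
Taking the infimum over $p\in B_{c_1}(r)\cup B_{c_2}(r)$ finishes the argument; in fact this route produces the slightly stronger constant $2r$, and one recovers exactly $3r$ if instead one bounds $\dist(p,M)\geq\dist(c_1,M)-r$ and estimates $\dist(c_1,M)$.

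The only place requiring care is the derivation and correct use of the perpendicular-bisector distance formula together with the sign check that $\norm{p-y}\geq\norm{p-x}$ — which is precisely where the harmless case split on whether $d\geq 6r$ enters; everything else is routine. There is no real obstacle here beyond bookkeeping.
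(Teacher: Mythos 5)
Your proof is correct and follows essentially the same route as the paper's: both rest on the perpendicular-bisector identity $\dist(p,M)=\abs{\norm{p-y}^2-\norm{p-x}^2}/(2\norm{x-y})$, with the paper applying it at the center $c_1$ and then subtracting $r$, while you apply it at an arbitrary point $p$ and use the factorization together with $\norm{p-y}+\norm{p-x}\geq\norm{x-y}$ to cancel the denominator. This small refinement is clean and even yields the slightly stronger constant $\frac{1}{2}d-2r$ in place of $\frac{1}{2}d-3r$.
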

\begin{proof}
	By symmetry, it suffices to show $\dist\lc M, B_{c_1}(r)\rc\geq \frac{1}{2}d-3r$. We may suppose $c_1=0$ and $d>6r$. Then it suffices to show $\dist\lc M,0\rc\geq\frac{1}{2}d-2r$. For any $z\in M$ we have
	$$\sum_{i=1}^{n}(x_i-y_i)\lp z_i-\frac{x_i+y_i}{2}\rp=0.$$
	By the point-plane distance formula
	\begin{align*}
		\dist\lc M,0\rc=\frac{\abs{\sum_{i=1}^{n}(x_i^2-y_i^2)}}{2\sqrt{\sum_{i=1}^{n}(x_i-y_i)^2}}&=\frac{\norm{y}^2-\norm{x}^2}{2\norm{x-y}}\\
		&\geq\frac{(d-r)^2-r^2}{2(d+2r)}\\
		&=\frac{1}{2}d-\frac{2dr}{d+2r}\\
		&\geq\frac{1}{2}d-2r.
	\end{align*}
	
\end{proof}
\bibliography{spectral}
\bibliographystyle{abbrv}
\end{document}